\documentclass{article}

\usepackage{microtype}
\usepackage{graphicx}
\usepackage{subcaption}
\usepackage{booktabs} 

\usepackage[hidelinks]{hyperref}
\usepackage{xcolor}

\definecolor{colordps}{HTML}{2E8B57}
\definecolor{colorembedopt}{HTML}{FF6347}

\newcommand{\colorDPS}[1]{%
  \begingroup\color{colordps}\ensuremath{#1}\endgroup
}

\newcommand{\colorEmbedopt}[1]{%
  \begingroup\color{colorembedopt}\ensuremath{#1}\endgroup
}

\usepackage[preprint]{neurips_2026}

\usepackage{amsmath, amssymb, amsfonts, amsthm, mathtools}
\usepackage{graphicx}
\usepackage[dvipsnames]{xcolor}
\usepackage{multirow}
\usepackage[shortlabels]{enumitem}
\usepackage{graphicx}
\usepackage{subcaption}
\usepackage{cancel}
\usepackage[sort,compress]{cleveref}%
\crefname{equation}{eq.}{eqs.}
\Crefname{equation}{Eq.}{Eqs.}
\Crefname{section}{\S}{\S}
\crefname{enumi}{Step}{Steps}

\usepackage{makecell} 

\usepackage{algorithm}
\usepackage{algorithmic}
\crefname{algocf}{algorithm}{algorithms}
\Crefname{algocf}{Algorithm}{Algorithms}

\DeclareMathOperator*{\argmax}{arg\,max}

\newtheorem{proposition}{Proposition}

\newtheorem{assumption}{Assumption}

\crefname{assumption}{assumption}{assumptions}
\Crefname{assumption}{Assumption}{Assumptions}
\crefname{definition}{definition}{definitions}
\Crefname{definition}{Definition}{Definitions}
\crefname{proposition}{proposition}{propositions}
\Crefname{proposition}{Proposition}{Propositions}

\newcommand{\BlackBox}{\rule{1.5ex}{1.5ex}}
\renewenvironment{proof}{\par\noindent{\bf Proof }}{\hfill\BlackBox\newline}

\definecolor{auxcolor}{RGB}{80,120,160}  

\newcommand{\smallo}{\mathop{o}}
\newcommand{\dd}{\mathop{}\!\mathrm{d}}

\newcommand{\Nres}{N_{\text{res}}}
\usepackage{enumitem}

\usepackage{zref-user}
\makeatletter
\zref@newprop{algline}{\theALC@line}
\zref@addprop{main}{algline}
\makeatother

\newcommand{\linelabel}[1]{\zlabel{#1}}
\newcommand{\lineref}[1]{Line~\zref@extractdefault{#1}{algline}{??}}

\usepackage[textsize=tiny]{todonotes}

\title{Robust Inference-Time Steering of Protein Diffusion Models via Embedding Optimization}

\author{%
  Minhuan Li\thanks{Correspondence to: Minhuan Li \texttt{<minhuanli@flatironinstitute.org>} and Luhuan Wu \texttt{<lwu@flatironinstitute.org>}.} \\
  Flatiron Institute \\
  \And
  Jiequn Han \\
  Flatiron Institute \\
  \AND
  Pilar Cossio \\
  Flatiron Institute \\
  \And
  Luhuan Wu\footnotemark[1] \\
  Flatiron Institute \\
}

\begin{document}

\maketitle

\begin{abstract}
A core challenge in structural biophysics is generating biomolecular conformations that are both physically plausible and consistent with experimental measurements. While sequence-to-structure diffusion models provide powerful priors, posterior sampling methods steer generation by perturbing atomic coordinates with gradients from experimental likelihoods. However, when the target lies in a low-density region of the prior, these methods require aggressive upweighting of the likelihood that can destabilize sampling and be sensitive to hyperparameters. We propose EmbedOpt, an inference-time steering framework that introduces an orthogonal optimization axis: rather than performing posterior sampling under a fixed prior, EmbedOpt directly optimizes the prior by updating the model's conditional embedding. This embedding space encodes rich coevolutionary signals, so optimizing it shifts the structural prior to align with experimental constraints. Empirically, EmbedOpt matches coordinate-based posterior sampling baselines on sparse distance constraints and outperforms them on cryo-electron microscopy map fitting, including real, noisy experimental ones. Furthermore, EmbedOpt's smooth optimization behavior yields robustness to hyperparameters spanning two orders of magnitude and enables comparable performance with fewer diffusion steps. Code is available at \url{https://github.com/rs-station/embedopt}.

\end{abstract}

\section{Introduction}

Biomolecules are dynamic systems that undergo conformational changes in order to perform their biological functions. Consequently, identifying 
relevant conformations
is essential for understanding its functional mechanisms. However, inferring them from experimental data is a challenging inverse problem. This difficulty arises from the high dimensionality of conformational space and  the need to enforce physical realism while maintaining agreement with experimental data. Traditionally, this problem has been addressed using physics-based force fields with stepwise sampling under the maximum entropy principle \cite{rieping2005inferential}; although powerful, such methods are computationally demanding and require extensive refinement to match both the physics and data \cite{croll2018isolde}.

A recent paradigm shift in the field leverages pretrained generative models as data-driven priors to regularize the conformational search. Protein sequence-to-structure models such as Alphafold~3 \citep{abramson2024accurate}  use a conditional diffusion module \citep{song2020score,ho2020denoising}
which enables \textit{distributional} modeling of 3D coordinates $x_0$ conditioned on sequence information $c$ by iteratively refining noisy structures $x_t$. 
While these models capture dominant structural modes with remarkable accuracy,  structure-determination inverse problems often require recovering  conformational states that lie outside the high-probability regions of the prior model. This necessitates the incorporation of additional constraints—e.g., derived from experimental measurements—that were not present during training. Viewed through a Bayesian lens, this is framed as a posterior inference problem:  the pretrained model defines a sequence-conditioned prior $p(x_0 \mid c)$ while experimental measurements $y$ define a likelihood $p(y \mid x_0)$,  yielding a posterior distribution $p(x_0 \mid y,c) \propto p(x_0 \mid c) p(y \mid x_0)$. 

\begin{figure}[!t]
    \centering
    \includegraphics[width=\linewidth]{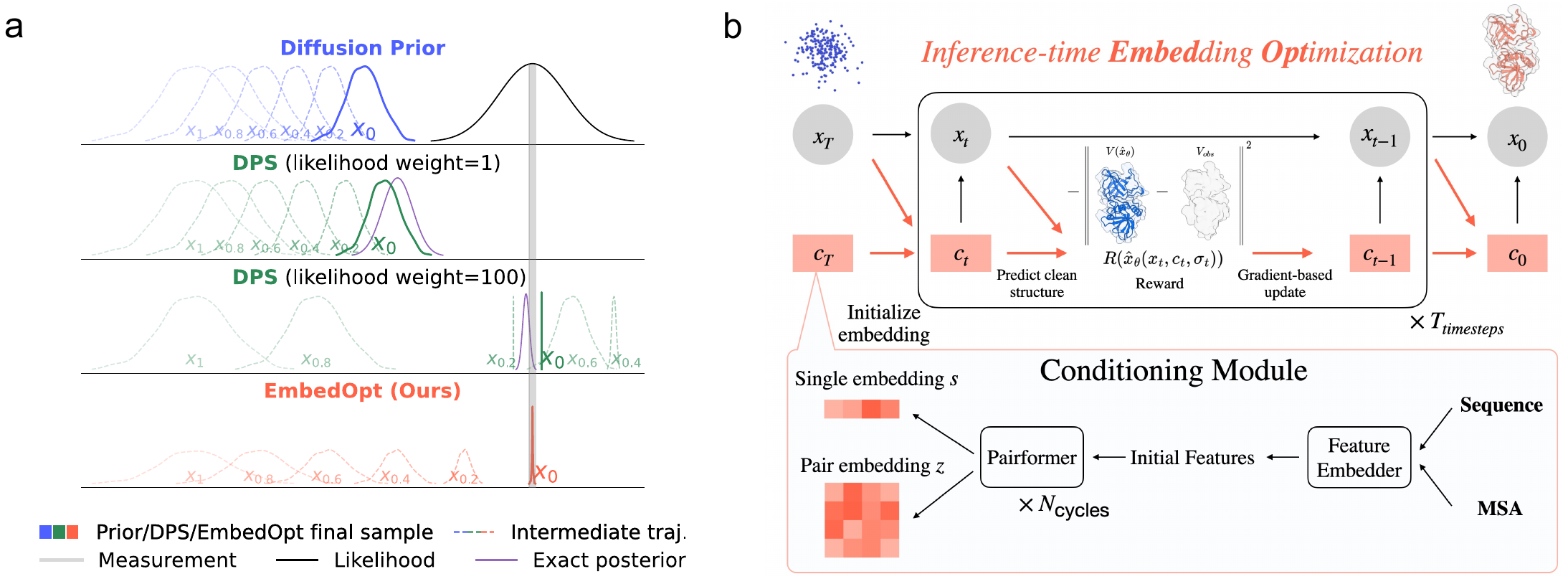}
        
    \caption{\textbf{(a) Illustration of DPS vs.\ EmbedOpt under prior--measurement mismatch.} The diffusion prior has limited overlap with the measurement (\textit{top}), so both the exact posterior and DPS samples remain distant from it (\textit{second}). Upweighting the likelihood pulls DPS samples closer but produces an ill-conditioned sampling landscape (\textit{third}). EmbedOpt (\textit{bottom}) instead iteratively updates the conditional embedding, dynamically shifting the prior to produce samples concentrating around the measurement (see \Cref{app:sec:synthetic}).
\textbf{(b) Schematic of EmbedOpt.} EmbedOpt adapts a pretrained diffusion model at inference time through optimization of the conditional embedding to maximize an experimental likelihood reward $R(\cdot)$ . \textit{(Top)} In a single forward pass, the  embedding is updated $c_t \to c_{t-1}$ using gradients from the denoised structure $\hat{x}_\theta$, greedily increasing the reward. \textit{(Bottom)} We instantiate EmbedOpt on an AlphaFold 3-style backbone, where the Conditioning Module embeds sequence and MSA inputs into the initial embedding $c_T$.}
    \label{fig:overview}
\end{figure}

Because the exact diffusion posterior is intractable, Diffusion Posterior Sampling (DPS) \citep{chung2022diffusion} and many subsequent methods provide a practical approximation by applying likelihood-gradient guidance to intermediate noisy atomic coordinates $x_t$, progressively steering samples toward high-likelihood regions. DPS-style approaches have been applied to recover protein conformations from cryo-electron microscopy (cryo-EM) maps and NOE spectroscopy data \citep{maddipatla2025inverse,raghumultiscale}, but their performance depends sensitively on the likelihood weighting, a phenomenon also observed in image inverse problems \citep{he2023manifold,zach2025statistical}. This balance is delicate, especially under prior--likelihood mismatch: as illustrated in \Cref{fig:overview}(a), when the prior assigns little mass to high-likelihood regions, DPS fails to generate measurement-consistent samples without aggressive likelihood upweighting. 
Similar behaviors arise in real systems: \citet{raghumultiscale} observe that models like AlphaFold 3 and Boltz-1 \citep{wohlwend2025boltz}  produce
samples concentrated around a dominant conformation and report limited optimization stability for their DPS-based method

To overcome these limitations, we propose \textit{EmbedOpt}, an inference-time method that reduces prior--likelihood mismatch by optimizing the conditional embedding $c$, effectively reshaping the diffusion prior to align with experimental constraints (\Cref{fig:overview}). This opens embedding-space optimization as a new algorithmic axis for inference-time steering of protein diffusion models, complementary to the predominant coordinate-space axis. In AlphaFold~3-style architectures, conditional embedding encodes rich coevolutionary signal derived from multiple sequence alignments (MSAs), so modifying it reshapes the model's structural preferences without altering its parameters. EmbedOpt formulates inverse structure determination as reward maximization and performs lightweight, single-step gradient ascent on the embedding at each diffusion step, preserving the model's structural regularization at substantially lower cost than fine-tuning.

We evaluate EmbedOpt against DPS-based methods on three biomolecular inverse problems: sparse residue-pair distance constraints (24 multi-domain proteins), synthetic cryo-EM map fitting (77 systems), and real experimental cryo-EM map fitting (6 targets from the CryoBoltz benchmark, where we additionally compare against the published CryoBoltz baseline). Across all three, EmbedOpt exhibits stable optimization across two orders of magnitude of learning rate, requires  fewer diffusion steps, and matches or outperforms baselines on real targets while preserving stereochemical quality.

\paragraph{Contributions.} We propose EmbedOpt to align  protein diffusion models with experimental measurement, with the following contributions: 
\textbf{(i) A new algorithmic axis} for inference-time steering of protein diffusion models, optimizing the conditional sequence embedding. 
\textbf{(ii) Theoretical analysis}: a trust-region characterization that establishes monotonic optimization behavior of EmbedOpt under small-step updates.  
\textbf{(iii) Robustness and efficiency}: stable performance across a wider hyperparameter range and with fewer diffusion steps than DPS-based methods, while preserving performance and stereochemistry on both distance-constraint and cryo-EM benchmarks.
\textbf{(iv) Real-world validation}: EmbedOpt matches or outperforms the published CryoBoltz baseline on 5/6 experimental cryo-EM targets while producing substantially better stereochemical geometry.

\section{Background}
\paragraph{Diffusion Models.}
Diffusion models \citep{ho2020denoising,song2020score} are powerful generative models that sample from a target distribution $p(x_0)$ through an iterative process.  We consider the probability flow ODE formulation \citep{karras2022elucidating}
\begin{align}\label{eq:prob-flow-ode}
    \dd x_t = - \dot{\sigma}(t) \sigma(t) \nabla_{x_t} \log p_t(x_t) \dd t 
\end{align}
where $\sigma(t)$ is a noise level increasing in $t$ with $\sigma(0)=0$,  $p_t(x_t) = \int p(x_0) \mathcal{N}(x_t | x_0, \sigma(t)^2)\dd x_0 $ is the time-dependent marginal distribution, and  $\nabla_{x_t} \log p_t(x_t)$ is the score function. The idea of diffusion models is that 
starting from $x_T \sim \mathcal{N}(0, \sigma_T^2\mathbb{I})$ for some large $\sigma_T$,  integrating this ODE backward in time recovers  target samples $x_0 \sim p(x_0)$. 
While the score is typically intractable, it can be approximated using a \textit{denoiser} network $\hat x_\theta(x_t, \sigma(t))$ trained to predict the posterior mean $\mathbb{E}[x_0 \mid x_t]$. Tweedie's formula \citep{efron2011tweedie} yields an approximation of the score $
\nabla_{x_t} \log p_t(x_t) \approx  [\hat x_\theta \left(x_t, \sigma(t)\right) -x_t] / {\sigma^2(t)}.
$
With a learned denoiser, we numerically integrate \Cref{eq:prob-flow-ode} backward on a discrete time grid $\tau_T>  \cdots > \tau_0=0$.  With a slight abuse of notation, we write $x_t := x_{\tau_t}$ and $\sigma_t:=\sigma(\tau_t)$ for  $t \in \{0,\dots,T\}$.  An Euler step from $\tau_t$ to $\tau_{t-1}$ gives
\begin{align}
\begin{split}
x_{t-1}&\approx x_t + \eta_t\bigl(\hat x_\theta(x_t,\sigma_t)-x_t\bigr), \quad \textrm{ where } \eta_t  \coloneqq {\Delta \sigma_t}/{\sigma_t} = {(\sigma_t-\sigma_{t-1})}/{\sigma_t}.   \label{eq:diffusion-update}
\end{split}
\end{align}

\paragraph{Protein Sequence-to-Structure Diffusion Models.}

Recent protein sequence-to-structure prediction models like AlphaFold~3 introduce a conditional diffusion head to predict a distribution $p(x_0 \mid c)$ of structures $x_0$ from sequence information $c$. This conditional diffusion 
is realized by designing the denoiser $\hat{x}_\theta$ in \Cref{eq:diffusion-update} to accept additional input $c$ such that it is written as $\hat{x}_\theta(x_t, c, \sigma_t)$. 
As illustrated in \Cref{fig:overview}(b), 
conditional embeddings are computed from the sequence and MSA information, which encodes coevolutionary signals. 
Conditioning the diffusion process on these embeddings anchors a strongly informed prior over protein structures, ensuring that generated structures remain consistent with the evolutionary constraints of the target sequence. 
In this work, we focus on the conditioning inputs $c=\{s, z\}$, where the \textit{single} embedding $s$ and \textit{pair} embedding $z$ are produced by the PairFormer module and capture \textit{residue-wise} and \textit{residue-pair} interactions, respectively.

\section{Method}

\paragraph{Setup.} 
We consider a conditional protein diffusion model that defines a prior distribution over 3D structures $x_0 \in \mathbb{R}^{\Nres \times 3}$,  conditioned on sequence embeddings $c=\{s,z\}$, where single embedding $s\in \mathbb{R}^{\Nres \times C_s}$ and pair embedding $ z \in \mathbb{R}^{\Nres \times \Nres \times C_z}$. Here $\Nres$ denotes the number of residues in the sequence, and $C_s$ and $C_z$  denote the channel dimensions of single and pair embeddings. 
For many inverse problems, we can prescribe an experimental likelihood  $p(y \mid x_0)$ given some measurement $y$ of the underlying structure $x_0$. We define a reward function $R(x_0) \propto \log p(y\mid x_0)$\footnote{Since our goal is to maximize the reward, scaling the log likelihood does not influence the optimum.} and for simplicity we omit the dependence on $y$. We assume $R$ is differentiable, as is the case in many applications. Our goal is to generate structures $x_0$ with high reward while maintaining physical plausibility.

\paragraph{EmbedOpt: Objective and Main Algorithm.}\label{subsec:method:embedopt}
Given an initial sample $x_T$ and embedding $c$, the diffusion model produces a final sample $x_0(x_T, c)$ as a function of $x_T$ and $c$, which can be stochastic due to the additional noise injection in the sampling scheme. For a fixed $x_T$, the \textit{oracle global objective} to optimize the embedding $c$ is 
  $  \argmax_{c} R(x_0(x_T, c))$. 
Directly optimizing this objective via gradient-based methods would require backpropagating through the entire sampling trajectory, which can be memory-intensive or numerically unstable.
To address this challenge, we adopt a dynamic optimization strategy over the sampling path. Given an intermediate sample $x_t$ at step $t$, we consider a  \textit{surrogate local objective} $ \argmax_{c} R(\hat x_\theta(x_t, c, \sigma_t))$. 
That is, we optimize the reward evaluated at the denoising prediction given  sample $x_t$ and noise level $\sigma_t$, which we call the \textit{surrogate reward}. As  $\sigma_t \to 0$ the surrogate reward is closer to the true reward. 
This relaxation is in the same spirit of 
diffusion posterior sampling works \citep[e.g.][]{chung2022diffusion}. 

\begin{algorithm}[t]
\caption{EmbedOpt}
\label{alg:embed-opt}
\begin{algorithmic}[1] 
\STATE  \textbf{Input:} differentiable reward function $R$,  original conditional embedding $c$, denoiser network $\hat x_\theta$,  adaptive learning rate $\alpha_t$, total steps $T$,  noise  schedule $\{\sigma_t\}_{t=0}^T$\\
\STATE  \textbf{Output:} final sample $x_0$

\STATE Initialize embedding $c_T \gets c$
and sample initial coordinate $x_T \sim \mathcal{N}(0,\sigma_T^2)$
\FOR{$t = T, T-1, \dots, 1$}
  \STATE\linelabel{alg:line:denoising-prediction}
   Make denoised prediction $\hat{x}_0 \gets \hat{x}_\theta(x_t, \colorEmbedopt{c_t}, \sigma_t)$ 
   
   \STATE \linelabel{alg:line:embedupdate} Update embedding $\colorEmbedopt{c_{t-1} \gets c_t + \alpha_t \nabla_{c_t} R(\hat{x}_0)}$ 
  \STATE  \linelabel{alg:line:coordupdate} Update  coordinate $x_{t-1} \gets x_t + \eta_t \big[\hat x_\theta(x_t, \colorEmbedopt{c_{t-1}}, \sigma_t)-x_t\big]$ where    $\eta_t =\frac{ \sigma_t-\sigma_{t-1}}{\sigma_{t-1}}$
\ENDFOR

\end{algorithmic}
\end{algorithm}

In practice, we perform greedy inference-time optimization by taking a single gradient step on the surrogate reward at each diffusion step. \Cref{alg:embed-opt} summarizes the procedure. We initialize the embedding $c_T =c$ with the pretrained model's conditioning inputs. At each step, we use the gradient of the surrogate reward to update embedding from $c_t$ to $c_{t-1}$. The updated embedding is then used both to advance the diffusion process  and to initialize optimization at the next step. This dynamically adapts the prior model while steering the sampling trajectory toward higher surrogate-reward regions.

While \Cref{alg:embed-opt} assumes a generic adaptive learning rate $\alpha_t$, in practice we find normalizing the gradient by its root mean squared value and using a constant base learning rate $\alpha$ is  effective and simplifies tuning. We apply the gradient normalization for the single and pair embeddings $s_t$ and $z_t$: 
\[
\bar g_{s_t} = \nabla_{s_t} R(\hat x_0) / \texttt{RMS} (\nabla_{s_t} R(\hat x_0)), \qquad \bar g_{z_t} = \nabla_{z_t} R(\hat x_0) / \texttt{RMS} (\nabla_{z_t} R(\hat x_0))
\]

where $\texttt{RMS}(v):=\sqrt{\frac{1}{d} \sum_{i=1}^d v_i^2}$ for a $d$-dimensional vector $v$. 
Then embeddings are updated with 
\begin{align}\label{eq:embedopt-update-normalized}
    s_{t-1} 
= s_t \colorEmbedopt{+ \alpha \, \bar g_{s_t}}~,
\qquad z_{t-1}
= z_t + \colorEmbedopt{\alpha \, \bar g_{z_t}}~. 
\end{align}
The complete algorithm with this RMS normalized gradient update is given in \Cref{alg:embed-opt-stochastic}, which is also adapted to the stochastic sampling scheme used in AlphaFold~3. 

\paragraph{Theoretical Analysis.}\label{subsec:method:analysis}
Although EmbedOpt operates as a dynamic optimization procedure, we show that it yields monotonic improvement of the surrogate reward under suitable regularity and trust-region assumptions, which can be achieved by choosing sufficiently small step size $\Delta \sigma_t$ and learning rate $\alpha$. 
Denote $F(x,c,\sigma) \coloneqq R(\hat x_\theta(x,c,\sigma))$ the surrogate reward  given $(x,c,\sigma)$. The following guarantee of the surrogate reward improvement for EmbedOpt holds:
\begin{proposition}[Informal]\label{prop:cross-time-minimal-informal} 
For a fixed $(x_t, c_t, \sigma_t)$, consider the transition to $(x_{t-1}, c_{t-1}, \sigma_{t-1})$ under the  EmbedOpt update (\Cref{alg:embed-opt}, Line 6-7). 
Assume the surrogate reward $F$ is locally smooth and $\hat x_\theta$ is bounded in a neighborhood of $(x_t, c_t, \sigma_t)$. The following bound holds 
\begin{align}
\begin{split}
F(x_{t-1},c_{t-1}, \sigma_{t-1})  \ge 
F(x_t,c_t, \sigma_t) +
\frac{\alpha_t}{2}\big\|g_{c_t}\big\|^2 &- \Delta \sigma_t \left[ \frac{G_x}{\sigma_t}  \|\hat x_\theta (x_t, c_{t-1}, \sigma_t)-  x_t\| 
+
G_\sigma \right]
\label{eq:gap-min-informal}
\end{split}
\end{align}
where $g_{c_t}=\nabla_{c_t} F(x_t,c_t, \sigma_t)$ and $G_x$ and $G_\sigma$ are Lipschitz constants depending on $(x_t, c_t, \sigma_t)$. 

Moreover, if the learning rate $0 \leq \alpha_t \leq \alpha_{\max}$ and the noise level step size $0 \leq \Delta \sigma_t \leq \Delta_{\max}$, 
where constants $\alpha_{\max}, \Delta_{\max}$ depend on $(x_t, c_t, \sigma_t)$,  
 the surrogate reward is non-decreasing: $F(x_{t-1},c_{t-1},\sigma_{t-1}) \geq F(x_t,c_t,\sigma_t)$  
(see a formal statement and proof in \Cref{app:subsec:prop}). 
\end{proposition}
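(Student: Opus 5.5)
We split the one-step change in $F$ by telescoping through the intermediate state $(x_t, c_{t-1}, \sigma_t)$:
\begin{align*}
F(x_{t-1},c_{t-1},\sigma_{t-1}) - F(x_t,c_t,\sigma_t)
&= \underbrace{F(x_t,c_{t-1},\sigma_t)-F(x_t,c_t,\sigma_t)}_{\text{(I): embedding step}} \\
&\quad + \underbrace{F(x_{t-1},c_{t-1},\sigma_{t-1})-F(x_t,c_{t-1},\sigma_t)}_{\text{(II): coordinate/noise step}} .
\end{align*}
The formal assumptions we will impose are the following. On a neighborhood $\mathcal{U}$ of $(x_t,c_t,\sigma_t)$, $\nabla_c F$ is $L_c$-Lipschitz in $c$. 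Also, $F$ is $G_x$-Lipschitz in $x$ and $G_\sigma$-Lipschitz in $\sigma$. Finally, $\|\hat x_\theta - x\|$ is bounded on $\mathcal{U}$ by some $B$, and the trust-region assumption requires that all intermediate points stay in $\mathcal{U}$. Note that the gradient in \Cref{alg:embed-opt}, $\nabla_{c_t}R(\hat x_0)$, is exactly $g_{c_t}=\nabla_c F(x_t,c_t,\sigma_t)$.

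For term (I), apply the standard descent lemma (for ascent) with $c_{t-1}=c_t+\alpha_t g_{c_t}$:
\[
F(x_t,c_{t-1},\sigma_t)\ \ge\ F(x_t,c_t,\sigma_t)+\alpha_t\|g_{c_t}\|^2-\tfrac{L_c\alpha_t^2}{2}\|g_{c_t}\|^2 .
\]
Whenever $\alpha_t\le 1/L_c$, this is at least $F(x_t,c_t,\sigma_t)+\tfrac{\alpha_t}{2}\|g_{c_t}\|^2$. This step is routine.

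For term (II), split it once more through $(x_{t-1},c_{t-1},\sigma_t)$. Lipschitzness in $x$ gives a loss of at most $G_x\|x_{t-1}-x_t\|$. By Line 7 we have $x_{t-1}-x_t=\eta_t[\hat x_\theta(x_t,c_{t-1},\sigma_t)-x_t]$. Lipschitzness in $\sigma$ gives a loss of at most $G_\sigma\,\Delta\sigma_t$. Substituting $\eta_t$ should produce the factor $\Delta\sigma_t/\sigma_t$ appearing in \eqref{eq:gap-min-informal}.

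The main obstacle is a discrepancy in Line 7, which writes $\eta_t=\Delta\sigma_t/\sigma_{t-1}$, whereas \eqref{eq:diffusion-update} writes $\Delta\sigma_t/\sigma_t$. I would either adopt the $\sigma_t$ convention or absorb the ratio $\sigma_t/\sigma_{t-1}\le 2$ (valid for $\Delta\sigma_t\le\sigma_t/2$) into $G_x$. I would also verify that the points $(x_{t-1},c_{t-1},\cdot)$ stay in $\mathcal{U}$. This holds when $\alpha_t\|g_{c_t}\|$ and $\Delta\sigma_t B/\sigma_{t-1}$ are below the radius of $\mathcal{U}$, and these conditions define part of $\alpha_{\max}$ and $\Delta_{\max}$.

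Combining (I) and (II) yields \eqref{eq:gap-min-informal}. For monotonicity, use the bound $\|\hat x_\theta(x_t,c_{t-1},\sigma_t)-x_t\|\le B$. It then suffices that
\[
\Delta\sigma_t\,\bigl(G_xB/\sigma_t+G_\sigma\bigr)\le \tfrac{\alpha_t}{2}\|g_{c_t}\|^2 .
\]
Here is a subtlety: if $g_{c_t}=0$, the right side vanishes. In that case non-decrease requires $\Delta\sigma_t=0$ unless we state the result with $\Delta_{\max}$ depending on $\alpha_t\|g_{c_t}\|^2$. So I would set
\[
\alpha_{\max}=\min\{1/L_c,\ r/\|g_{c_t}\|\}, \qquad
\Delta_{\max}=\min\Bigl\{\tfrac{\alpha_t\|g_{c_t}\|^2}{2(G_xB/\sigma_t+G_\sigma)},\ \text{radius constraints}\Bigr\},
\]
where $r$ is the radius of $\mathcal{U}$. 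This makes explicit that these constants depend on $(x_t,c_t,\sigma_t)$ and on $\alpha_t$, as the informal statement permits.
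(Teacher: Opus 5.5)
Your proposal is correct and follows the paper's proof essentially step for step: the same three-stage split $c_t\to c_{t-1}$, then $x_t\to x_{t-1}$, then $\sigma_t\to\sigma_{t-1}$, with the ascent lemma under $\alpha_t\le\min\{1/L_c,\ r_c/\|g_{c_t}\|\}$, Lipschitz losses in $x$ and $\sigma$, and the same $\Delta_{\max}$ (your $B$ plays the role of the paper's $\|\hat x_\theta\|_{\infty,\mathcal{C}}+\|x_t\|$). The $\eta_t$ discrepancy you flagged in Line 7 is real, and the paper's formal statement resolves it as in your first option, by writing the coordinate update with the factor $\Delta\sigma_t/\sigma_t$.
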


This proposition shows that EmbedOpt acts as a local trust-region method: within a sufficiently small trust region, the gradient-ascent gain $\tfrac{\alpha_t}{2}\|g_{c_t}\|^2$ in \Cref{eq:gap-min-informal} dominates the deviation from coordinate and noise-level updates, yielding a non-decreasing surrogate reward. When $g_{c_t}=0$, both the first-order improvement and the trust-region condition vanish and EmbedOpt remains at a local optimum.

\paragraph{Comparison to DPS.}\label{subsec:method:comparison}
DPS \citep{chung2022diffusion}  steers generation using  likelihood-gradient guidance on the noisy coordinate $x_t$ under a fixed diffusion prior $p(x_0\mid c)$ (full background in \Cref{app:subsec:dps-background}). To make a structural comparison, we rewrite per-step updates of both methods in a common form that reduces to a denoising step plus a reward-gradient term 
(derivation in \Cref{app:subsec:taylor})
\begin{align}
\textrm{DPS:}\quad     & x_{t-1} = x_t + \eta_t \bigl(\hat x_0 - x_t \;\colorDPS{+\; \alpha_t\, J_{x_t}^\top \nabla_{\hat x_0} R(\hat x_0)}\bigr), \label{eq:dps-update-main-text}\\
\textrm{EmbedOpt:}\quad & x_{t-1} \approx x_t + \eta_t \bigl[\hat x_0 - x_t \;\colorEmbedopt{+\; \alpha_t\, J_{c_t} J_{c_t}^\top \nabla_{\hat x_0} R(\hat x_0)}\bigr], \label{eq:embedopt-update-taylor}
\end{align}
where $\hat x_0 \coloneqq \hat x_\theta(x_t, c_t, \sigma_t)$, and $J_{x_t}$ and $J_{c_t}$ denote the Jacobians of $\hat x_0$ with respect to $x_t$ and $c_t$, respectively.
Both updates share the same \emph{denoising-reward gradient} $\nabla_{\hat x_0} R(\hat x_0)$, but transform it through different operators. DPS pulls this gradient back to the noisy-coordinate space via $J_{x_t}^\top$, coupling the update direction to the local sensitivity of the denoiser with respect to $x_t$. EmbedOpt instead applies a $J_{c_t} J_{c_t}^\top$ preconditioner: the coordinate update direction stays within the span of the reward gradient, while its magnitude and anisotropy are modulated by the embedding-space geometry. We show empirically in \Cref{sec:experiments} that this distinction translates to  greater robustness of EmbedOpt.  

\section{Related Works}\label{sec:related-works}

\paragraph{Diffusion-based Methods for Solving Structure-Determination Inverse Problems.}
\citet{maddipatla2025inverse} and \citet{raghumultiscale}  address biophysical inverse problems as a posterior sampling task under a AlphaFold~3-style  pretrained prior model and operate within the DPS  framework. \citet{maddipatla2025inverse} infer conformational ensembles, i.e. multiple conformational states corresponding to the same protein sequence, from ensemble-averaged experiment data, with experiments restricted to X-ray and NMR data; their pipeline introduces a gradient-normalization schedule  which we adopt verbatim in our DPS baseline (\Cref{subsec:exp:real-cryoem}).  
CryoBoltz \citep{raghumultiscale} focuses on the cryo-EM application, introducing a multi-scale DPS strategy which applies global structural constraints in the early sampling stage and local constraints in later stage; we benchmark  against  CryoBoltz  on real cryo-EM targets with EmbedOpt using a uniform objective throughout optimization  (\Cref{subsec:exp:real-cryoem}). 
Our work is orthogonal to coordinate-based methods, unlocking a new  axis for biophysical inverse problems. 

A concurrent study, \citet{maddipatla2026inference}, also explores inference-time embedding-space optimization, applying it to ensemble structure generation through a multi-round procedure that repeatedly optimizes embeddings across diffusion runs.
Our work is complementary in scope: we develop embedding-space steering as a general inference-time framework that updates the embedding within a single diffusion trajectory, with a trust-region analysis and empirical comparison against coordinate-space methods on single-structure determination, showing improved robustness; \citet{maddipatla2026inference} focus on empirical application to ensemble generation with task-specific design choices. 

\paragraph{Exploration of Alternative Conformations.}

An adjacent line of research is to sample diverse conformational states of protein structures. While existing sequence-to-structure deep learning models exhibit limited diversity, an exploratory line of research seeks to simulate alternative conformations by subsampling or perturbing the MSA inputs \citep{wayment2024predicting,kalakoti2025afsample2,lee2025large,del2022sampling}. More recently, \citet{richman2025unlocking} propose an inference-time diffusion sampling algorithm in Alphafold~3-style models that generate conformations as a prescribed distance away from a reference structure, using a combination of DPS and particle filtering methods \citep{wu2023practical}. 
Our current work focuses on generating conformations consistent with experimental constraints by adapting a pretrained conditional diffusion model. 

We summarize non-diffusion-based methods for structure determination and general inference-time steering methods for diffusion models (e.g., for image applications) in \Cref{app:sec:additional-related-works}.

\section{Experiments}\label{sec:experiments}

 We evaluate EmbedOpt against baselines, on two structure determination tasks: (i) satisfying sparse residue-pair distance constraints, a controlled proxy for data from techniques like chemical cross-linking or FRET (\Cref{subsec:exp:distance}); and (ii) fitting to synthetic (\Cref{subsec:exp:synthetic-cryoem}) and real (\Cref{subsec:exp:real-cryoem}) experimental cryo-EM density maps. Both tasks probe challenging regimes where experimental targets lie far from the pretrained model's dominant mode. We show that EmbedOpt: \textbf{(i)} matches or exceeds DPS in best-achieved performance, \textbf{(ii)} is robust to learning rates spanning two orders of magnitude where DPS is more sensitive, \textbf{(iii)} permits a $4\times$ reduction in diffusion steps without performance loss, and \textbf{(iv)} outperforms the published CryoBoltz baseline on 5 out of 6 real cryo-EM targets.

\paragraph{Experiment Protocol.} 
Our experiments use the Protenix model \citet{bytedance2025protenix} as the diffusion prior. We compare EmbedOpt to the unguided prior and to DPS within the same model; on real cryo-EM data we additionally compare to CryoBoltz \cite{raghumultiscale}, a published DPS-based method built on Boltz-1 \cite{wohlwend2025boltz}. We apply the gradient normalization strategy from \citet{maddipatla2025inverse}, which is also based on Protenix, to DPS (see \Cref{alg:dps-alphafold} for full details). This places the two methods in a relatively comparable optimization regime parameterized by a single base learning rate $\alpha$. All generated structures undergo energy relaxation, following \citet{maddipatla2025inverse}; this step is lightweight and improves local geometry with only mild degradation in reward (\Cref{app:sec:energy_relaxation}). All runs use 3 random seeds unless otherwise specified.
In terms of computational cost, 
EmbedOpt requires roughly $1.4$–$1.8\times$ the per-diffusion-step compute of DPS, reflecting one extra forward pass through the denoiser; the end-to-end inference compute gap shrinks as the pre-diffusion costs are included; both methods remain practical on a single GPU at the system sizes we consider (see \Cref{app:sec:runtime_profiling} for compute profiling). All experiment details are in  \Cref{app:sec:experiment}.

\subsection{Distance-Constrained Structure Determination}\label{subsec:exp:distance}

\begin{figure}[!t]
    \centering
    \includegraphics[width=1.0\linewidth]{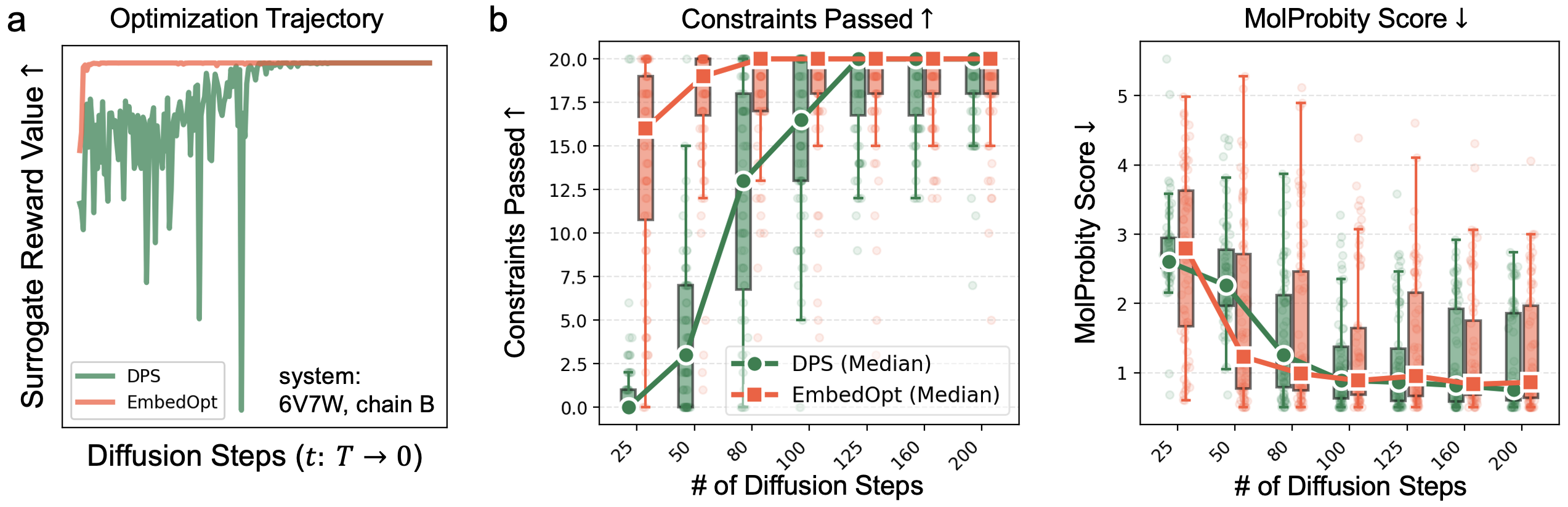}
    \caption{\textbf{Distance Constraint Benchmark.} (a) \textbf{Optimization Trajectory:} Representative reward traces show a smooth and monotonic EmbedOpt increase, while DPS has high-frequency volatility. (b) 
    \textbf{Step-Efficiency Scaling:}
    Constraints passed (left) and MolProbity score (right) as a function of the number of diffusion steps. DPS performance deteriorates sharply below 100 steps, whereas EmbedOpt maintains a median constraint satisfaction rate above 75\% and lower MolProbity scores.
    }
    \label{fig:distance_constraint}
\end{figure}

Experimental techniques such as chemical cross-linking or FRET often yield sparse distance constraints. To evaluate inference-time steering under such conditions, we use residue-pair distance constraints as a controlled proxy. We assemble a benchmark of 24 multi-domain protein systems from \citet{zhang2025distance}, as they exhibit flexible inter-domain orientations that challenge existing generative models. To control the task difficulty, we derive constraints by selecting the top $K=20$ residue pairs (a sparsity level typical of cross-linking experiments) with the largest structural discrepancies between the ground truth and the prior model's predictions, with a $\delta = 2.0$ \AA\ tolerance. We note that this constraint-selection procedure uses oracle access to the target and is therefore an idealized stress-test rather than a model of a specific experimental protocol; in practice, constraint pairs would be specified by the experimental modality (e.g., cross-linker reactivity profiles). The reward function is
$R(x_0) = - \sum_{i=1}^K \min\!\left(|d_i(x_0) - d_i^{\textrm{target}}|, \, \delta\right)^2$,
where $d_i(x_0)$ is the predicted distance for the $i$-th constrained pair, and $d_i^{\textrm{target}}$ is the target distance. Performance is measured by the number of satisfied constraints; stereochemical quality is measured by the MolProbity score.

\begin{figure}[!t]
    \centering
    \includegraphics[width=1.0\linewidth]{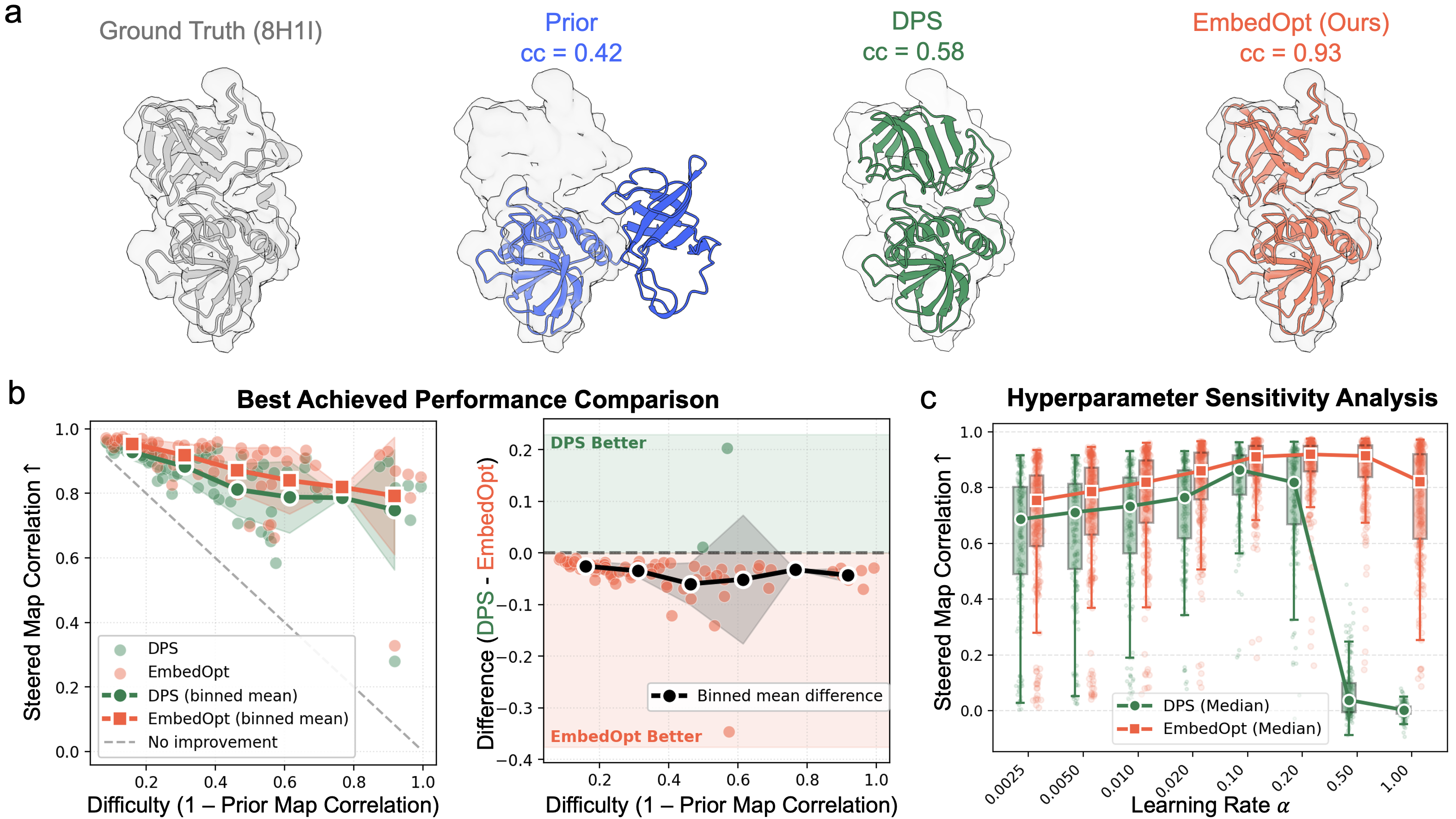}
    \caption{\textbf{Cryo-EM Map Fitting Benchmark.} (a)\textbf{ Visualization of a challenging target:} 8H1I requires significant inter-domain rearrangement of the prior structure (correlation coefficient - $cc = 0.42$) to fit the target(gray volume). DPS remains trapped in a local optimum ($cc=0.58$), while EmbedOpt successfully reorients the domains ($c c=0.93$). (b) \textbf{Best-achieved Performance vs. Task Difficulty}: (left) Best-sampled structures across 77 systems (dots), binned by task difficulty ($1 - $ prior map correlation). EmbedOpt maintains an advantage especially on harder targets ($>0.4$). (right) The difference plot shows EmbedOpt outperforming DPS across the majority of systems.
    (c) \textbf{Hyperparameter Sensitivity:} Distribution of map correlations across all systems for varying learning rates, spanning the interquartile range (25th-75th percentile) with the median highlighted. EmbedOpt maintains high performance across learning rates spanning two orders of magnitude.
    }
    \label{fig:map_reward}
\end{figure}

\textbf{Smoother Reward Optimization Trace.} 
Our theoretical analysis (\Cref{prop:cross-time-minimal-informal}) suggests that EmbedOpt's preconditioned update should operate within a well-behaved trust region, potentially avoiding the geometric sensitivities that affect DPS. To empirically verify this, we analyze the optimization dynamics under $T=200$ diffusion steps. \Cref{fig:distance_constraint}a reveals a sharp contrast consistent with our expectations: while DPS's reward trace fluctuates frequently,  EmbedOpt exhibits a smooth, monotonic improvement of the surrogate reward. We observe this stability across test systems and hypothesize that it translates into practical efficiency by permitting larger optimization step sizes.

\textbf{Robust Performance with Reduced Diffusion Steps.} 
To test this hypothesis, we perform an efficiency scaling experiment on the distance constraint task. Starting from a 200-step baseline (with $\alpha = 0.1$, optimal for both methods per \Cref{app:fig:af_distance_results}b), we reduce the diffusion steps while proportionally increasing learning rates to maintain a constant total guidance magnitude ($\alpha \times $ \# steps $=$ const). As shown in \Cref{fig:distance_constraint}b, EmbedOpt demonstrates remarkable resilience: it maintains near-optimal constraint satisfaction (median constraints passed $\approx 20$) down to 50 diffusion steps—a $4\times$ speedup over the baseline—while preserving excellent MolProbity scores ($<2.0$). DPS, in contrast, collapses below 100 steps as high learning rates destabilize coordinate updates.

With sufficient steps and tuned hyperparameters, both methods achieve comparable peak performance (\Cref{app:fig:af_distance_results}a), as the flexibility of sparse distance targets allows both  to saturate the reward. However, EmbedOpt exhibits broader hyperparameter robustness, maintaining valid geometries across a much wider learning-rate spectrum (\Cref{app:fig:af_distance_results}b). \emph{Takeaway: EmbedOpt achieves $4\times$ reduction in diffusion steps over DPS while preserving constraint satisfaction and stereochemical validity.}

\subsection{Cryo-EM Map Fitting (Synthetic Benchmark)}\label{subsec:exp:synthetic-cryoem}

In single-particle cryo-EM, 3D density maps are reconstructed from filtered 2D projection images. A typical downstream task is to build an atomic model that maximizes agreement with the map, often starting from an existing PDB deposition or a prediction model. To simulate this problem at scale, we assemble a diverse benchmark of 77 protein systems from the PDB with high-resolution structures ($<$4.5 \AA) and low sequence similarity ($<$25\%). We generate synthetic target cryo-EM maps $V_{\textrm{obs}}$ at 5.0 \AA\ resolution using \texttt{SFC\_Torch} \cite{li2025sfcalculator}. To accurately reflect cryo-EM physics, the forward model explicitly simulates electrostatic potentials using electron scattering form factors, following standard practice in cryo-EM validation suites \cite{afonine2018new} (see \Cref{app:sec:cryoem} for details). Atomic B-factors are uniformly set to 50 \AA$^2$ to standardize thermal variation. We use the same differentiable forward model to render a map $V(x_0)$ from a generated structure $x_0$. 
The reward is the negative mean squared error between the normalized target map and the normalized rendered map
$R(x_0) = - \frac{1}{N_x N_y N_z} \sum_{N_x, N_y, N_z}\left(V(x_0) - V_{\textrm{obs}}\right)^2,$
where $V(x_0), V_{\textrm{obs}} \in \mathbb{R}^{N_x\times N_y \times N_z}$ and $N_x, N_y, N_z$ are the grid dimensions. Performance is assessed via (i) map correlation (\textit{cc}) computed using \texttt{phenix.validation\_cryoem} \cite{afonine2018new}, and (ii) physical plausibility quantified by the MolProbity score using \texttt{phenix.molprobity} \cite{williams2018molprobity}. 

\textbf{Overcoming Local Optima.} We first illustrate the qualitative difference between methods on a representative challenging target, 8H1I, where the prior generates a conformation with an incorrect domain orientation (\textit{cc}$=0.42$) relative to the reference map (\Cref{fig:map_reward}a). As the target lies far from the prior's dominant mode, DPS fails to correct this topology and stagnates at a local optimum (\textit{cc}$=0.58$) even under the best hyperparameters tested. EmbedOpt, in contrast, successfully resolves the global domain rearrangement (\textit{cc}$=0.93$). See \Cref{app:fig:cryoem_examples} for additional examples.

\textbf{Improved Performance in Low-Density Prior Regimes.} 
To systematically quantify this advantage, we compare best-achieved performance (after sweeping learning rates) across the full dataset (\Cref{fig:map_reward}b). We define task ``difficulty" as $1 - $ prior map correlation, a proxy for the misalignment between the prior's dominant mode and the experimental target. Both methods perform comparably on ``easy" targets (difficulty $<0.4$), but EmbedOpt consistently outperforms DPS on ``hard" targets. 

\textbf{Engineering Robustness and Manifold Stability.} Beyond peak performance, EmbedOpt demonstrates superior engineering robustness (\Cref{fig:map_reward}c). 
DPS requires delicate hyperparameter tuning to find a narrow efficacy window of learning rates, as low learning rates provide insufficient guidance while high values produce unphysical structures (\Cref{app:fig:cryoem_failure_case}).  
EmbedOpt exhibits a stable performance plateau across learning rates spanning from $0.01$ to $1.0$, achieving higher average performance than DPS while substantially reducing the cost of hyperparameter search. This stability extends to stereochemical quality, where EmbedOpt maintains valid geometries even at large learning rates where DPS fails entirely (\Cref{app:fig:cryoem_molprobity_score}). \emph{Takeaway: On challenging targets requiring large conformational shifts, EmbedOpt outperforms DPS, robust across a $100\times$ learning-rate range.}

\subsection{Cryo-EM Map Fitting (Real Experimental Targets)}\label{subsec:exp:real-cryoem}

\begin{figure}[!t]
    \centering
    \includegraphics[width=1.0\linewidth]{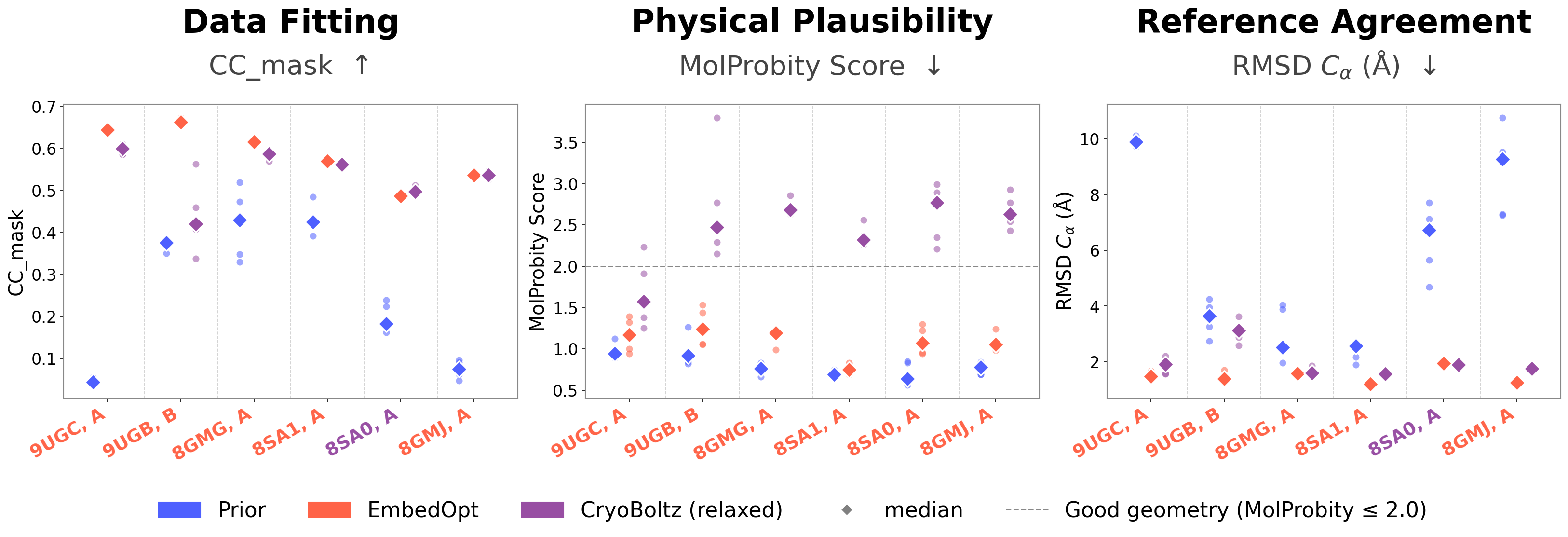}
    \caption{\textbf{EmbedOpt is competitive with or outperforms the CryoBoltz baseline on real experimental cryo-EM targets.} Performance of the unguided Prior (blue), EmbedOpt (red), and CryoBoltz (purple, after energy relaxation) across six experimental targets. Three aspects of model quality are assessed: Data Fitting (left, CC mask $\uparrow$), Physical Plausibility (center, MolProbity score $\downarrow$), and Reference Agreement (right, RMSD C$\alpha$ $\downarrow$).  Each of the 5 seed predictions are dots, with medians marked by filled diamonds. The x-axis label is colored by the method achieving the better median. EmbedOpt achieves a better median than CryoBoltz on all three metrics for 5/6 systems.}
    \label{fig:real_map_cryoboltz}
\end{figure}

To evaluate whether the robustness observed in synthetic settings transfers to real, noisy experimental data, we benchmark EmbedOpt on 6 real cryo-EM targets of varying  resolutions used to evaluate  CryoBoltz, a recent DPS-based method built on  Boltz-1. We  structure this evaluation as \textbf{two complementary experiments answering distinct questions}: (i) \emph{Does the embedding-vs-coordinate algorithmic choice still matter on real data?} — addressed by a same-model comparison of EmbedOpt vs.\ DPS within the Protenix prior (\Cref{app:fig:cryoboltz_comparison_si}); and (ii) \emph{Is EmbedOpt competitive with the published state-of-the-art?} — addressed by the cross-model comparison against CryoBoltz (\Cref{fig:real_map_cryoboltz}).

\textbf{Same-model Comparison: EmbedOpt vs.\ DPS in Protenix.} We first verify that the algorithmic advantages observed on synthetic data persist on real maps. As shown in \Cref{app:fig:real_map_lr_sensitivity}, EmbedOpt's median CC mask remains above DPS across the full learning-rate sweep $\alpha \in [0.0025, 0.5]$; DPS only matches EmbedOpt within a narrow window around $\alpha \sim 0.1$ before degrading sharply at higher rates. At a matched $\alpha = 0.1$ (with 100 diffusion steps), EmbedOpt still outperforms DPS on CC mask on all 6 systems while both methods maintain good stereochemical quality (MolProbity $\lesssim 2$, \Cref{app:fig:cryoboltz_comparison_si}). This mirrors the synthetic-benchmark findings. 

\textbf{Cross-model Comparison: EmbedOpt vs.\ CryoBoltz.} We then compare against the published baseline  \textbf{CryoBoltz}. We note that this comparison conflates the base model choice (Protenix vs.\ Boltz-1) with inference-time schemes, while the previous experiment disentangles these factors. 
For EmbedOpt, we use 100 diffusion steps and the same learning rate $\alpha=0.1$ across all six systems (9UGC, 9UGB, 8GMG, 8SA1, 8SA0, 8GMJ); for CryoBoltz, we use its default published settings (in particular, with 200 diffusion steps). To ensure a fair comparison, both methods receive identical inputs, undergo the same post-prediction energy relaxation, and are evaluated with the same Phenix-based metric pipeline; full protocol details are provided in \Cref{app:par:cryoboltz_protocol}. We evaluate three metrics (\Cref{fig:real_map_cryoboltz}): data fitting (CC mask), 
physical plausibility (MolProbity score) 
and reference agreement (RMSD C$\alpha$ relative to the deposited structure). EmbedOpt achieves  better results (median) than  CryoBoltz across all three metrics on 5 out of 6 systems; on the remaining system (8SA0), CryoBoltz achieves a marginally higher CC mask at the cost of a substantially worse MolProbity score. Per-system structural visualizations of the median-CC predictions for all methods (Prior, DPS, EmbedOpt, CryoBoltz) on each of the six targets are provided in \Cref{app:fig:real_cryoem_gallery_A,app:fig:real_cryoem_gallery_B}.

In addition, coordinate-based steering is \textbf{substantially more susceptible} to reward-fitting artifacts than embedding-space optimization. Before energy relaxation, CryoBoltz predictions achieve high map correlation but exhibit severely degraded geometry (MolProbity $\gg 2.0$); the energy relaxation improves its geometry but remains suboptimal compared to EmbedOpt, as severe stereochemical violations cannot be fully repaired by local optimization (\Cref{app:fig:cryoboltz_comparison_si}). EmbedOpt also degrades at extreme learning rates (\Cref{fig:map_reward}c, \Cref{app:fig:cryoem_molprobity_score}), so the difference is quantitative rather than qualitative; however, the degradation onset is  delayed and milder. 
\emph{Takeaway: Our within-model evidence shows the algorithmic mechanism transfers to real data; our cross-model comparison shows the resulting pipeline is competitive with or outperforms the published state-of-the-art on 5 out of 6 targets while producing substantially better stereochemical geometry under matched post-processing.}

\section{Discussion}
In this work, we introduce EmbedOpt, an inference-time method that steers biomolecular sequence-to-structure diffusion models by optimizing conditional embeddings.  
We show that EmbedOpt matches or outperforms DPS, with  greater hyperparameter robustness and optimization stability  across multiple structure-determination inverse problems. EmbedOpt provides a steering axis orthogonal to coordinate-based DPS, suggesting a natural future direction of integrating the two, e.g. by interleaving EmbedOpt's prior updates with DPS-style posterior sampling in an Empirical-Bayes-like fashion. 

\paragraph{Limitations and Future Works.}
EmbedOpt is not without failure cases, and characterizing when it does \emph{not} work is informative. Its robustness is quantitative rather than absolute, and we identify three algorithmic regimes that bound its effectiveness; \Cref{app:sec:failure_modes} provides extended per-system discussion, proposed remedies, and a consolidated discussion of evaluation-scope caveats.

\textit{Manifold coverage.} EmbedOpt only navigates the structural manifold induced by the pretrained model: when the target conformation is poorly represented, neither EmbedOpt nor DPS recovers it. This regime arises both when evolutionary information is limited (e.g.\ no MSA input; \Cref{app:sec:ablation_msa}) and on out-of-distribution targets such as 8K23 in our cryo-EM benchmark (\Cref{app:fig:cryoem_examples}). Closing this gap requires expanded pretraining or model fine-tuning rather than inference-time methods alone.

\textit{Manifold deviation.} At aggressive learning rates or with reward signals far from the training distribution, EmbedOpt can deviate from the structural manifold, producing stereochemical degradation that energy relaxation cannot fully repair (\Cref{fig:map_reward}c) — though this boundary is farther than for DPS. Potential solutions include physics-aware regularization during optimization and, more fundamentally, pretraining objectives that explicitly regularize the embedding space (e.g., via variational bottlenecks).

\textit{Local optima.} While EmbedOpt typically navigates around the coordinate-space local optima where DPS stagnates (\Cref{fig:map_reward}a, e.g.\ 8H1I), the trust-region guarantee (\Cref{prop:cross-time-minimal}) is itself local: when the embedding-space gradient vanishes at a non-global optimum, EmbedOpt can be stuck. We observe this inverse failure on a small subset of targets (e.g., 8F2R, \Cref{app:fig:cryoem_examples}), where DPS's noisy coordinate updates escape an embedding-space local mode that EmbedOpt does not. The two methods thus have complementary failure modes in different parameter spaces; hybrid schemes combining EmbedOpt's stable optimization with coordinate-space stochasticity are a promising direction.

\begin{ack}
The authors thank Alisia Fadini, Colin Kalicki, and Tetiana Parshakova for valuable discussions.
\end{ack}

\bibliography{reference}
\bibliographystyle{abbrvnat}

\newpage
\appendix

\crefalias{section}{appendix}
\crefalias{subsection}{appendix}

\crefname{section}{appendix}{appendices}
\Crefname{section}{Appendix}{Appendices}
\crefname{subsection}{appendix}{appendices}
\Crefname{subsection}{Appendix}{Appendices}

\section{Method Details}\label{app:sec:algorithm}

\subsection{EmbedOpt Algorithm Adapted to AlphaFold~3 Sampling Scheme}

\begin{algorithm}[t]
\caption{Embed-Opt Adapted for AlphaFold~3 Sampling Scheme}
\label{alg:embed-opt-stochastic}
\begin{algorithmic}[1] 
\STATE  \textbf{Input:} differentiable reward function $R$,  original conditional embedding $c=\{z,s\}$, denoiser network $\hat x_\theta$, base reward learning rate $\alpha$, total steps $T$,  noise  schedule $\{\sigma_t\}_{t=0}^T$, minimum noise amplification level $\gamma_0=1.0$,  noise amplification factor $\gamma=0.8$, noise scale $\rho=1.003$, step scale $\eta=1.5$ 
\STATE  \textbf{Output:} final sample $x_0$

\STATE Initialize embedding $c_T \gets c$
\STATE Sample initial coordinate $x_T \sim \mathcal{N}(0,\sigma_T^2)$
\FOR{$t = T, T-1, \dots, 1$}
 \IF{$\sigma_{t-1} > \gamma_{\min}$}
    \STATE Add noise to coordinate  $x_t \gets x_t + \rho \sqrt{(\gamma+1)^2 - 1}\,\sigma_t\,\epsilon_t,
      \quad \epsilon_t \sim \mathcal{N}(0,\mathbb{I})$ 
    \STATE Amplify noise level $\sigma_t \gets (\gamma + 1)\sigma_t$
  \ENDIF
  \STATE\linelabel{alg:line:denoising-prediction:alphafold}
   Make denoised prediction $\hat{x}_0 \gets \hat{x}_\theta(x_t, \colorEmbedopt{c_t=(s_t,z_t)} ,  \sigma_t)$ 
    \STATE Compute gradient $g_{c_t} \gets [g_{s_t}, g_{z_t} ]$ where $g_{s_t} \gets \nabla_{s_t} R(\hat x_0)$ and $g_{z_t} \gets \nabla_{z_t} R(\hat x_0)$
    \STATE Normalize gradient by RMS $\bar g_{c_t} \gets [\bar g_{s_t}, \bar g_{z_t}]$ where $\bar g_{s_t} \gets g_{s_t}/\sqrt{\frac{1}{d_s} \sum_{i=1}^{d_s} \left(g_{s_t}^{(i)}\right)^2} $ and $\bar g_{z_t}
\gets g_{z_t} / \sqrt{\frac{1}{d_z} \sum_{i=1}^{d_z} \left(g_{z_t}{(i)}\right)^2}$ 

  \STATE Update embedding   $\colorEmbedopt{c_{t-1} \gets c_t + \alpha \bar g_{c_t}}$ 
  \STATE  \linelabel{alg:line:coordupdate:alphafold} Update  coordinate $x_{t-1} \gets x_t + \eta_t \big[\hat x_\theta(x_t, \colorEmbedopt{c_{t-1}}, \sigma_{t-1})-x_t\big]$ where    $\eta_t =\frac{ \sigma_t-\sigma_{t-1}}{\sigma_t} * \eta$
\ENDFOR

\end{algorithmic}
\end{algorithm}

The original sampler in AlphaFold~3 adopts the stochastic sampling scheme proposed by \citet{karras2022elucidating}, which injects a small amount of noise into the coordinates $x_t$ at the initial steps (Lines 6–9) until a minimum noise level is reached. It further employs a step scale of $\eta = 1.5$ to increase the diffusion step size (Line 14). Protenix, as an open-source reproduction of AlphaFold~3, uses the same settings.

We present the full EmbedOpt algorithm, compatible with AlphaFold~3–style  sampling scheme, in \Cref{alg:embed-opt-stochastic}. We adopt the same default hyperparameters in all protein benchmark experiments.

Finally, note that when setting $\gamma=0$ and step scale $\eta=1$ in 
\Cref{alg:embed-opt-stochastic}, it recovers the deterministic sampler in \Cref{alg:embed-opt} with a standard diffusion step size.

\subsection{Proposition on the Monotone Surrogate Reward Improvement of EmbedOpt}\label{app:subsec:prop}
We provide a formal statement and proof of \Cref{prop:cross-time-minimal-informal}. 

\paragraph{Setup.}
Define the  surrogate reward at $(x_t, c_t, \sigma_t)$, 
\begin{align}
F(x_t,c_t, \sigma_t)\;\coloneqq\;R \big(\hat x_\theta(x_t,c_t,\sigma_t)\big). 
\end{align}
EmbedOpt's update at step $t$ is 
\begin{align}
c_{t-1} &= c_t + \alpha_t \nabla_{c_t} F(x_t,c_t, \sigma_t) \label{eq:emebdopt-update-c} \\
x_{t-1} &= x_t + \sigma_t^{-1} \Delta \sigma_t \Big(\hat x_\theta(x_t,c_{t-1},\sigma_t)-x_t\Big), \label{eq:embedopt-update-x}
\end{align}
where $\Delta\sigma_t\coloneqq \sigma_t-\sigma_{t-1}>0$ is the noise level step size. 

\begin{assumption}[Local $c$-smoothness of $F$]\label{ass:c-smooth-local}
Given $(x_t, c_t, \sigma_t)$, there exists a neighborhood $\mathcal{C}$ of $c_t$ with a radius $r_c$ such that $\forall c, c' \in \mathcal{C}$, the function $c\mapsto F(x_t,c, \sigma_t)$ is $L_c$-smooth:
\begin{align}
F(x_t,c', \sigma_t)\ge F(x_t,c, \sigma_t)+\nabla_{c} F(x_t,c, \sigma_t)^\top(c'-c)-\frac{L_c}{2}\|c'-c\|^2.
\end{align}
where $\mathcal{C}, r_c>0, L_c>0$ depend on $(x_t, c_t, \sigma_t)$. 
\end{assumption}

\begin{assumption}[Local $c$-boundedness of $\hat x_\theta$]\label{ass:c-boundedness-local}
    For the given $(x_t, \mathcal{C}, \sigma_t)$ in \Cref{ass:c-smooth-local}, $ \max_{c \in \mathcal{C}} \| \hat x_\theta (x_t, c, \sigma_t)\| \leq \|\hat x_\theta\|_{\infty, \mathcal{C}}$ for some constant $\|\hat x_\theta\|_{\infty, \mathcal{C}}$ depending on $(x_t, \mathcal{C}, \sigma_t)$.
\end{assumption}

\begin{assumption}[Local $x$-Lipschitz continuity of $F$]\label{ass:x-lip-local} 
For the given $(x_t, \mathcal{C}, \sigma_t)$ in \Cref{ass:c-smooth-local}, there exists a neighborhood $\mathcal{X}$ of $x_t$ with a radius $r_x$ such that $\forall x, x' \in \mathcal{X}, \forall c \in \mathcal{C}$, the map $x\mapsto F(x,c, \sigma_t)$ is $G_x$-Lipschitz:
\begin{align}
|F(x',c, \sigma_t)-F(x,c, \sigma_t)|\le G_x\|x'-x\|, 
\end{align}
where $\mathcal{X},  r_x>0, G_x>0$ depend on $(x_t, \mathcal{C},\sigma_t)$. 
\end{assumption}

\begin{assumption}[Local $\sigma$-Lipschitz continuity]\label{ass:sigma-uniform-local}
For the given $(\mathcal{X}, \mathcal{C}, \sigma_t)$ in \Cref{ass:c-smooth-local,ass:x-lip-local}, 
there exists a maximum step size  $r_\sigma>0$ such that $\forall x \in \mathcal{X}, \forall c \in \mathcal{C}, \forall \sigma$ such that $0\leq \sigma_t -\sigma \leq r_\sigma$,  the map $\sigma \mapsto F(x, c, \sigma)$  is $G_\sigma$-Lipschitz: 
\begin{align}
|F(x,c, \sigma)-F(x,c, \sigma_t)| \le G_\sigma (\sigma_t-\sigma)
\end{align}
where $r_\sigma>0, G_\sigma>0$ depend on $(\mathcal{X}, \mathcal{C}, \sigma_t)$. 
\end{assumption}

\begin{proposition}[EmbedOpt one-step surrogate reward improvement]\label{prop:cross-time-minimal}
Given $(x_t, c_t, \sigma_t)$, 
under \Cref{ass:c-smooth-local,ass:x-lip-local,ass:sigma-uniform-local}, if the EmbedOpt learning rate $\alpha_t$ satisfies
\begin{align}
0<\alpha_t \le  \alpha_{max}\coloneqq \min \{ \frac{1}{L_c}, \frac{r_c}{\| \nabla_{c_t} F(x_t, c_t, \sigma_t) \|} \} \label{eq:trust-region-learning-rate}
\end{align}
and the noise level step size  $\Delta \sigma_t = \sigma_t - \sigma_{t-1}$ satisfies  
\begin{align}
\begin{split}
\Delta\sigma_t \leq  \Delta_{\max} \coloneqq  \min \{& \left[ \sigma_t^{-1}G_x (\|\hat x_\theta\|_{\infty, \mathcal{C}} + \|x_t \|) \;+\;  G_{\sigma_t}
\;\right]^{-1}
\frac{\alpha_t}{2}\,\big\|\nabla_{c_t} F(x_t,c_t, \sigma_t)\big\|^2, \\
&  \frac{\sigma_t r_x}{\|\hat x_\theta\|_{\infty, \mathcal{C}} + \|x_t \|}, r_\sigma \}
\end{split} 
\label{eq:trust-region-step-size}
\end{align}
then the surrogate reward under the EmbedOpt update \Cref{eq:emebdopt-update-c,eq:embedopt-update-x} is non-decreasing across time:
\begin{align}
F(x_{t-1},c_{t-1},\sigma_{t-1}) & \geq F(x_t,c_t,\sigma_t). \label{eq:reward-monotonicity}
\end{align}
Moreover, the quantitative bound holds 
\begin{align}
&F(x_{t-1},c_{t-1}, \sigma_{t-1}) \\
& \ge
F(x_t,c_t, \sigma_t)
+
\frac{\alpha_t}{2}\big\|\nabla_{c_t} F(x_t,c_t, \sigma_t)\big\|^2
-
G_x\frac{\Delta\sigma_t}{\sigma_t}\|\hat x_\theta(x_t, c_{t-1}, \sigma_t)-x_t\|
-
G_\sigma \Delta\sigma_t.
\label{eq:gap-min}
\end{align}
\end{proposition}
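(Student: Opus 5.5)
The plan is a three-step telescoping decomposition of the surrogate-reward change across one EmbedOpt step, passing through two intermediate points:
\begin{align*}
F(x_{t-1},c_{t-1},\sigma_{t-1}) - F(x_t,c_t,\sigma_t)
&= \big[F(x_t,c_{t-1},\sigma_t)-F(x_t,c_t,\sigma_t)\big] \\
&\quad + \big[F(x_{t-1},c_{t-1},\sigma_t)-F(x_t,c_{t-1},\sigma_t)\big] \\
&\quad + \big[F(x_{t-1},c_{t-1},\sigma_{t-1})-F(x_{t-1},c_{t-1},\sigma_t)\big].
\end{align*}
The first bracket is the embedding gain at fixed $(x_t,\sigma_t)$. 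The second is the coordinate move at fixed $(c_{t-1},\sigma_t)$. The third is the noise-level change at fixed $(x_{t-1},c_{t-1})$. I bound each with the matching assumption, after checking that the relevant points lie in the neighborhoods $\mathcal{C}$ and $\mathcal{X}$.

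\textbf{Embedding step.} Since $\alpha_t\le r_c/\|g_{c_t}\|$, we have $\|c_{t-1}-c_t\|=\alpha_t\|g_{c_t}\|\le r_c$, so $c_{t-1}\in\mathcal{C}$. Applying \Cref{ass:c-smooth-local} with $c=c_t$, $c'=c_{t-1}$ gives $F(x_t,c_{t-1},\sigma_t)-F(x_t,c_t,\sigma_t)\ge \alpha_t\|g_{c_t}\|^2-\tfrac{L_c}{2}\alpha_t^2\|g_{c_t}\|^2$. Using $\alpha_t\le 1/L_c$, this is at least $\tfrac{\alpha_t}{2}\|g_{c_t}\|^2$.

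\textbf{Coordinate step.} By \Cref{eq:embedopt-update-x}, $\|x_{t-1}-x_t\| = \tfrac{\Delta\sigma_t}{\sigma_t}\|\hat x_\theta(x_t,c_{t-1},\sigma_t)-x_t\|$. Because $c_{t-1}\in\mathcal{C}$, \Cref{ass:c-boundedness-local} and the triangle inequality bound this by $\tfrac{\Delta\sigma_t}{\sigma_t}(\|\hat x_\theta\|_{\infty,\mathcal{C}}+\|x_t\|)$. The second condition in $\Delta_{\max}$ makes this at most $r_x$, so $x_{t-1}\in\mathcal{X}$. \Cref{ass:x-lip-local} with $c=c_{t-1}$ then bounds the second bracket below by $-G_x\tfrac{\Delta\sigma_t}{\sigma_t}\|\hat x_\theta(x_t,c_{t-1},\sigma_t)-x_t\|$.

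\textbf{Noise-level step.} Since $x_{t-1}\in\mathcal{X}$, $c_{t-1}\in\mathcal{C}$ and $0\le\Delta\sigma_t\le r_\sigma$, \Cref{ass:sigma-uniform-local} bounds the third bracket below by $-G_\sigma\Delta\sigma_t$.

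Summing the three bounds gives \Cref{eq:gap-min}. For monotonicity, I replace $\|\hat x_\theta(x_t,c_{t-1},\sigma_t)-x_t\|$ by the uniform bound $\|\hat x_\theta\|_{\infty,\mathcal{C}}+\|x_t\|$. The total penalty is then at most $\Delta\sigma_t\big[\sigma_t^{-1}G_x(\|\hat x_\theta\|_{\infty,\mathcal{C}}+\|x_t\|)+G_\sigma\big]$. The first condition in $\Delta_{\max}$ makes this at most $\tfrac{\alpha_t}{2}\|g_{c_t}\|^2$, which yields \Cref{eq:reward-monotonicity}.

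The main obstacle is bookkeeping rather than analysis: each intermediate point must lie in the neighborhood where the corresponding constant is valid. In particular, $x_{t-1}\in\mathcal{X}$ requires the bound from \Cref{ass:c-boundedness-local}, so that assumption must be invoked even though the formal proposition lists only the other three. The degenerate case $g_{c_t}=0$ also needs a separate remark. There $\alpha_{\max}$ is read with $r_c/0=\infty$, and $\Delta_{\max}=0$ forces $\Delta\sigma_t=0$, making the inequality trivial.
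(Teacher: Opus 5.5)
Your proposal is correct and follows the paper's proof essentially step for step. It uses the same chain through $(x_t,c_{t-1},\sigma_t)$ and $(x_{t-1},c_{t-1},\sigma_t)$, the same checks that $c_{t-1}\in\mathcal{C}$ and $x_{t-1}\in\mathcal{X}$, and the same substitution of the uniform bound $\|\hat x_\theta\|_{\infty,\mathcal{C}}+\|x_t\|$ so that the first condition in $\Delta_{\max}$ yields monotonicity. Your remark that \Cref{ass:c-boundedness-local} is needed even though the statement omits it is accurate: the paper's Step 2 invokes it too.
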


\paragraph{Remarks.}
\begin{itemize}
    \item \Cref{ass:c-smooth-local,ass:c-boundedness-local,ass:x-lip-local,ass:sigma-uniform-local} are on the smoothness and boundedness of the surrogate reward $F(\cdot, \cdot, \cdot)$ in a neighborhood of $(x_t, c_t, \sigma_t)$,  which  hold for sufficiently regular denoiser network $\hat x_\theta(\cdot, \cdot, \cdot)$ and the reward function $R(\cdot)$.  
    \item The learning rate requirement for $\alpha_t$ in \Cref{eq:trust-region-learning-rate} is a classical gradient ascent condition, satisfied for sufficiently small $\alpha_t$. This assumption ensures that the embedding update yields non-negative reward improvement within a trust region. 
    \item The assumption on the noise level step size $\Delta \sigma_t$ in \Cref{eq:trust-region-step-size} is another trust-region assumption which ensures the reward gain by updating embedding  dominates the worst-case surrogate reward decrease due to $x_t \to x_{t-1}$ and $\sigma_t \to \sigma_{t-1}$. Noticeably, when the embedding gradient $\nabla_{c_t} F(x_t, c_t, \sigma_t) \neq 0$, we can pick sufficiently small noise level step size $\Delta \sigma_t$ for this assumption to hold. However, if $c_t$ is a local optimum with $\nabla_{c_t} F(x_t, c_t, \sigma_t)=0$, this assumption is vacuous and 
    there is no reward ascent guarantee against the shift caused by coordinate and noise level updates.
\end{itemize}

\begin{proof}
We first evaluate the surrogate reward gain from the embedding update $c_{t} \to c_{t-1}$ in Step 1, for a sufficiently small  learning rate $\alpha_t$ in \Cref{eq:trust-region-learning-rate}. 
Then we compute the reward lower bound while updating $x_t\to x_{t-1}$ in Step 2 and $\sigma_t \to \sigma_{t-1}$ in Step 3. Finally, under the step-size assumption in \Cref{eq:trust-region-step-size}, we prove the non-decreasing surrogate reward given a  full update from $(x_t, c_t, \sigma_t)$ to $(x_{t-1}, c_{t-1}, \sigma_{t-1})$ in Step 4, which leads to \Cref{eq:reward-monotonicity,eq:gap-min}. 

\paragraph{Step 1: gain from embedding update $c_t \to c_{t-1}$ given fixed $x_t$ and $\sigma_t$.}
When the learning rate  $\alpha_t \leq \frac{r_c}{\|\nabla_{c_t} F(x_t, c_t, \sigma_t)\|}$ (implied by \Cref{eq:trust-region-learning-rate}), we have 
$ \|c_{t-1} -c_t\|   =  \alpha_t \| \nabla_{c_t}  F(x_t, c_t, \sigma_t)\| \leq r_c$, 
and hence $c_{t-1} \in \mathcal{C}$ (recall $\mathcal{C}$ is defined as the neighborhood of $c_t$ with a radius $r_c$). 

By \Cref{ass:c-smooth-local} with $(c, c')=(c_t,c_{t-1}) \in \mathcal{C}$, 
\begin{align}
F(x_t,c_{t-1}, \sigma_t)
&\geq F(x_t,c_t, \sigma_t)
+ \nabla_{c_t} F(x_t,c_t, \sigma_t)^\top(c_{t-1}-c_t)
- \frac{L_c}{2}\|c_{t-1}-c_t\|^2.
\end{align}
Using $c_{t-1}-c_t=\alpha_t \nabla_{c_t} F(x_t,c_t, \sigma_t)$,
\begin{align}
F(x_t,c_{t-1}, \sigma_t)
&\ge F(x_t,c_t, \sigma_t)
+ \alpha_t\Big(1-\frac{L_c\alpha_t}{2}\Big)\|\nabla_{c_t} F(x_t,c_t, \sigma_t)\|^2.
\end{align}
If $0<\alpha_t\le 1/L_c$, then $1-\frac{L_c\alpha_t}{2}\ge \tfrac12$, hence
\begin{align}
F(x_t,c_{t-1}, \sigma_t)
\geq F(x_t,c_t, \sigma_t) + \frac{\alpha_t}{2}\|\nabla_{c_t} F(x_t,c_t, \sigma_t)\|^2.
\label{eq:step1-min}
\end{align}

\paragraph{Step 2: reward lower bound from coordinate update $x_t \to x_{t-1}$ given fixed $c_{t-1}$ and $\sigma_t$.}
Note that 
\begin{align}
    \|x_{t-1} - x_t \|&= \Delta \sigma_t \sigma_t^{-1} \|\hat x_\theta (x_t, c_{t-1}, \sigma_t) - x_t \| \\
    &\leq \Delta \sigma_t \sigma_t^{-1} \big (\|\hat x_\theta (x_t, c_{t-1}, \sigma_t)\| +\| x_t \|  \big) \\
    & \leq  \Delta \sigma_t \sigma_t^{-1} \big ( \|\hat x_\theta \|_{\infty, \mathcal{C}} +\| x_t \|  \big)
\end{align}
where the last line follows from \Cref{ass:c-boundedness-local}. 

When the noise-level step size $\Delta \sigma_t \leq \frac{\sigma_t r_x}{ \|\hat x_\theta \|_{\infty, \mathcal{C}} +  \|x_t\|}$ implied by \Cref{eq:trust-region-step-size}, we have that $\|x_{t-1} - x_t\| \leq r_x$ and hence $x_{t-1} \in \mathcal{X}$ (recall $\mathcal{X}$ is defined as the neighborhood of $x_t$ with a radius $r_x$). 

By \Cref{ass:x-lip-local} with $(x, x')=(x_t,x_{t-1}) \in \mathcal{X}$ ,
\begin{align}
F(x_{t-1},c_{t-1}, \sigma_t)
\geq F(x_t,c_{t-1}, \sigma_t) - G_x \|x_{t-1}-x_t\|.
\end{align}
Using $x_{t-1} - x_t=  \sigma_t^{-1}  \Delta \sigma \| \hat x_\theta (x_t, c_{t-1}, \sigma_t) - x_t\|$, we have 

\begin{align}
F(x_{t-1},c_{t-1}, \sigma_t)
& \geq F(x_t,c_{t-1}, \sigma_t) -  \sigma_t^{-1}  G_x \Delta \sigma_t \| \hat x_\theta (x_t, c_{t-1}, \sigma_t) - x_t\|.
\label{eq:step2-min}
\end{align}

\paragraph{Step 3: reward lower bound from noise level update $\sigma_t \to \sigma_{t-1}$ given fixed $x_{t-1}$ and $c_{t-1}$.}
By \Cref{ass:sigma-uniform-local} with $(\sigma, \sigma') = (\sigma_t, \sigma_{t-1})$ (satisfied when $\Delta \sigma_t \leq r_\sigma$ by \Cref{eq:trust-region-step-size}), 
\begin{align}
F(x_{t-1},c_{t-1}, \sigma_{t-1})
\geq F(x_{t-1},c_{t-1}, \sigma_t) - G_\sigma \Delta\sigma_t.
\label{eq:step3-min}
\end{align}

\paragraph{Step 4: full update}
Combining \Cref{eq:step1-min,eq:step2-min,eq:step3-min}, 
\begin{align}
    &F(x_{t-1}, c_{t-1}, \sigma_{t-1}) \\
   & \geq F(x_{t-1},c_{t-1}, \sigma_t) - G_\sigma \Delta\sigma_t \\ 
    &\geq F(x_t,c_{t-1}, \sigma_t) -  \sigma_t^{-1}  G_x \Delta \sigma_t \| \hat x_\theta (x_t, c_{t-1}, \sigma_t) - x_t\| - G_\sigma \Delta \sigma_t \\ 
    &\geq F(x_t,c_t, \sigma_t) + \frac{\alpha_t}{2}\|\nabla_{c_t} F(x_t,c_t, \sigma_t)\|^2 -  \sigma_t^{-1}  G_x \Delta \sigma_t \| \hat x_\theta (x_t, c_{t-1}, \sigma_t) - x_t\| - G_\sigma \Delta \sigma_t \label{eq:embedopt-bound-deriv-lastline}
\end{align}
The last line recovers the bound in \Cref{eq:gap-min}.  

When the noise level also satisfies the following trust region as part of the condition in \Cref{eq:trust-region-step-size}:
\begin{align} 
   \Delta\sigma_t & \leq   \left[ \sigma_t^{-1}G_x (\|\hat x_\theta\|_{\infty, \mathcal{C}} + \|x_t \|) \;+\;  G_{\sigma_t}
\;\right]^{-1}
\frac{\alpha_t}{2}\,\big\|\nabla_{c_t} F(x_t,c_t, \sigma_t)\big\|^2, 
\end{align}
which implies 
\begin{align} 
    \frac{\alpha_t}{2}\|\nabla_{c_t} F(x_t,c_t, \sigma_t)\|^2 -  \sigma_t^{-1}  G_x \Delta \sigma_t \| \hat x_\theta (x_t, c_{t-1}, \sigma_t) - x_t\| - G_\sigma \Delta \sigma_t  \geq 0,
\end{align}
 we have the non-decreasing surrogate reward for a full EmbedOpt update across time: 
\begin{align}
    F(x_{t-1}, c_{t-1}, \sigma_{t-1}) \geq F(x_t, c_t, \sigma_t). 
\end{align}
\end{proof}

\subsection{Derivation of the First-order Taylor Approximation to the  EmbedOpt Update in \Cref{eq:embedopt-update-taylor}}\label{app:subsec:taylor}

We apply a first-order Taylor approximation of $\hat x_\theta(x_t, c_{t-1}, \sigma_t)$ around $c_t$: 
\begin{align}
\hat x_\theta (x_t, c_{t-1}, \sigma_t) &\approx \hat x_\theta (x_t, c_{t}, \sigma_t) + J_{c_t} (c_{t-1} - c_t) 
\end{align}
where the remainder  is $\smallo{(\|c_{t-1} - c_t\|)}$, and $J_{c_t} \coloneqq \nabla_{c_t} \hat x_\theta(x_t, c_t, \sigma_t)$. 

Using the embedding update rule, i.e. $c_{t-1} - c_t = \alpha_t \nabla_{c_t} R(\hat x_\theta(x_t, c_t, \sigma_t))$, 
\begin{align}
    \hat x_\theta (x_t, c_{t-1}, \sigma_t)&\approx \hat x_\theta (x_t, c_{t}, \sigma_t) + \alpha_t J_{c_t}^\top \nabla_{c_t}R(\hat x_\theta (x_t, c_t, \sigma_t))  
\end{align}

Applying  chain rule to $\nabla_{c_t}R(\hat x_\theta (x_t, c_t, \sigma_t))$ and write $\hat x_0=\hat x_\theta (x_t, c_t, \sigma_t)$,  
\begin{align}\label{eq:embedopt-xhat-update}
      \hat x_\theta (x_t, c_{t-1})&\approx \hat x_\theta (x_t, c_{t}, \sigma_t) + \alpha_t J_{c_t} J_{c_t}^\top \nabla_{\hat x_0}R(\hat x_0). 
\end{align}

Finally, EmbedOpt's full one-step update of $x_t \to x_{t-1}$ can be approximated by 
\begin{align}
    x_{t-1} &= x_t + \eta_t \left[ \hat x_\theta (x_t, c_{t-1}, \sigma_t) -x_t\right]
    \\ 
    &\approx x_t + \eta_t \left[\hat x_\theta (x_t, c_{t}, \sigma_t) + \alpha_t J_{c_t} J_{c_t}^\top \nabla_{\hat x_0}R(\hat x_0) -x_t \right],
\end{align}
which recovers \Cref{eq:embedopt-update-taylor}.

\section{Diffusion Posterior Sampling and Applications to Biophysical Inverse Problems}\label{app:sec:dps}
\subsection{DPS Background}\label{app:subsec:dps-background}

Diffusion posterior sampling \citep[DPS,][]{chung2022diffusion} is a popular approach for solving inverse problems by leveraging diffusion models as flexible priors in a Bayesian posterior sampling framework. Consider a likelihood  of the form  $p^{R}(x_0) \propto \exp \{  R(x_0) \}$ where $R(\cdot)$ is a reward function that encodes alignments with experimental measurement. Given a diffusion model prior $p(x_0 \mid c)$, DPS aims to (approximately) sample from the resulting posterior which is proportional to $p(x_0 \mid c) \exp \{  R(x_0)\}$.

To sample from the posterior, the probability flow ODE in \Cref{eq:prob-flow-ode} is modified to include the additional gradient term of the log expected exponentiated reward, which is referred to as \textit{likelihood guidance}, 
\begin{align}
      \dd x_t &=  -\dot \sigma(t) \sigma(t) \Big[ \nabla_{x_t} \log p_t(x_t)  \colorDPS{ + \nabla_{x_t}\log \mathbb{E}\left[\exp \{   R(x_0)\}\mid x_t  \right] }\Big] \dd t
\end{align}

While intractable, DPS approximates the log  expected exponentiated reward by the reward evaluated at the posterior expectation

\begin{align}
 \log \mathbb{E}\left[\exp \{   R(x_0)\}\mid x_t \right]  & \approx  \log \exp \{  R\left( \mathbb{E} \left[x_0 \mid x_t \right]\right) \} =   R\left(\mathbb{E}[x_0 \mid x_t]\right). 
\end{align}

Using the denoising prediction $\hat x_\theta (x_t,c, \sigma_t) \approx \mathbb{E}[x_0 \mid x_t]$, DPS's discrete-time update is given by

\begin{align}\label{eq:dps-update-original}
    x_{t-1} &= x_t + (\sigma_t - \sigma_{t-1}) \sigma_t \left[ \frac{\hat x_\theta (x_t, c, \sigma_t) - x_t}{\sigma_t^2} \colorDPS{ + \nabla_{x_t} R ( \hat x_\theta(x_t, c, \sigma_t))} \right] \\
    &= x_t + \underbrace{\frac{\sigma_t - \sigma_{t-1}}{\sigma_t}}_{\coloneqq \eta_t} \left[ \hat x_\theta (x_t, c, \sigma_t) - x_t \colorDPS{ + \sigma_t^2 \nabla_{x_t} R ( \hat x_\theta(x_t, c, \sigma_t) )} \right]. 
\end{align}

In practice,   using an adaptive ``learning rate" schedule $\alpha_t$  that controls the likelihood guidance strength can be helpful, 
\begin{align}\label{eq:dps-update}
    x_{t-1} &= x_t + \eta_t (\hat x_\theta (x_t, \sigma_t) - x_t \colorDPS{ + \alpha_t \nabla_{x_t} R ( \hat x_\theta(x_t, \sigma_t) })). 
\end{align}
The optimal learning rate schedule would depend on specific downstream tasks. 

\paragraph{Connection to Reweighting Likelihood.} Setting $\alpha_t = \sigma_t^2 w$ admits a Bayesian interpretation in which the likelihood is reweighted by a factor of $w$ relatively to the prior, with the targeted posterior $\propto p(x_0) \exp \{w R(x_0)\}$. Since DPS is only an approximate inference method, this interpretation should be viewed as heuristic rather than exact.

\begin{algorithm}[t]
\caption{Diffusion Posterior Sampling \citep[DPS,][]{chung2022diffusion}}
\label{alg:dps}
\begin{algorithmic}[1] 
\STATE  \textbf{Input:} differentiable reward function $R$, original conditional embedding $c$, denoiser network $\hat x_\theta$,  adaptive learning rate $\alpha_t$, total steps $T$, noise  schedule $\{\sigma_t\}_{t=0}^T$\\
\STATE  \textbf{Output:} final sample $x_0$

\STATE Sample initial coordinate $x_T \sim \mathcal{N}(0,\sigma_T^2)$

\FOR{$t = T, T-1, \dots, 1$}
  \STATE 
   Make denoised prediction $\hat{x}_0 \gets \hat{x}_\theta(x_t, c \color{black}, \sigma_t)$ 
  \STATE  Update  coordinate $x_{t-1} \gets x_t + \eta_t \big[\hat x_0 -x_t  \colorDPS{ + \alpha_t \nabla_{x_t} R(\hat x_0 )} \big]$ where    $\eta_t =\frac{ \sigma_t-\sigma_{t-1}}{\sigma_{t}}$
\ENDFOR

\end{algorithmic}
\end{algorithm}

\paragraph{Generic DPS Algorithm.} We summarize the DPS algorithm with a generic learning rate schedule $\alpha_t$ and under a standard ODE sampling scheme in \Cref{alg:dps}.

\subsection{DPS Algorithm Adapted to AlphaFold~3 Sampling Scheme}
\begin{algorithm}[!t]
\caption{DPS Adapted for AlphaFold~3 Sampling Scheme}
\label{alg:dps-alphafold}
\begin{algorithmic}[1] 
\STATE  \textbf{Input:} differentiable reward function $R$,  original conditional embedding $c=\{z,s\}$, denoiser network $\hat x_\theta$, base reward learning rate $\alpha$, total steps $T$,  noise  schedule $\{\sigma_t\}_{t=0}^T$, minimum noise amplification level $\gamma_0=1.0$,  noise amplification factor $\gamma=0.8$, noise scale $\rho=1.003$, step scale $\eta=1.5$ 
\STATE  \textbf{Output:} final sample $x_0$

\STATE Sample initial coordinate $x_T \sim \mathcal{N}(0,\sigma_T^2)$
\FOR{$t = T, T-1, \dots, 1$}
 \IF{$\sigma_{t-1} > \gamma_{\min}$}
    \STATE Add noise to coordinate  $x_t \gets x_t + \rho \sqrt{(\gamma+1)^2 - 1}\,\sigma_t\,\epsilon_t,
      \quad \epsilon_t \sim \mathcal{N}(0,\mathbb{I})$ 
    \STATE Amplify noise level $\sigma_t \gets (\gamma + 1)\sigma_t$
  \ENDIF
  \STATE 
   Make denoised prediction $\hat{x}_0 \gets \hat{x}_\theta(x_t, c,  \sigma_t)$ 
    \STATE Compute gradient $g_{x_t} \gets \nabla_{x_t} R(x_0)$
    \STATE Normalize gradient $\bar g_{x_t} \gets \|\hat x_0 - x_t \| \frac{g_{x_t}}{\|g_{x_t}\|}$
  \STATE  Update  coordinate $x_{t-1} \gets x_t + \eta_t \big[\hat x_\theta(x_t, c, \sigma_t) -x_t \colorDPS{+ \alpha \bar g_{x_t}}\big]$ where    $\eta_t =\frac{ \sigma_t-\sigma_{t-1}}{\sigma_t} * \eta$
\ENDFOR

\end{algorithmic}
\end{algorithm}

We follow the gradient normalization strategy in \citet{maddipatla2025inverse} (official implementation  can be found in \url{https://github.com/sai-advaith/guided_alphafold}, which is also built on the Protenix model), and summarize the DPS algorithm adapted for AlphaFold~3 sampling scheme in \Cref{alg:dps-alphafold}.

\section{Details on Synthetic Illustration in \Cref{fig:overview} (a)} \label{app:sec:synthetic}
In the synthetic example in \Cref{fig:overview} (a), we consider a diffusion prior model to be a 1-dimensional Gaussian $p(x_0 \mid c) = \mathcal{N}(x_0 \mid c, 0.5^2)$ conditioned on the location parameter $c=5$. We set the diffusion noise schedule $\sigma(t)=t$ for $t\in[0,1]$. Since $p(x_0 \mid c)$ is Gaussian, we can access the conditional expectation $\mathbb{E}[x_0 \mid x_t] \, \forall (x_t,t)$ without training a denoiser network. 

The measurement likelihood is given by $\mathcal{N}(y \mid x_0, 1)$ with the measurement $y=20$. This setting simulates the case where the prior model has low probability mass on the experimental measurements. 

We run the unguided prior model sampling, DPS, and EmbedOpt using $T=1,000$ uniform timesteps over $[0,1]$. 

To connect likelihood reweighting and the DPS's learning rate (as discussed in \Cref{app:subsec:dps-background}), we set $\alpha_t= \sigma_t^2 w$ in \Cref{alg:dps} where $w$ is a weighting parameter. More concretely,
\begin{itemize}
    \item When $w=1$, the exact posterior $p(y\mid x_0, c) \propto \mathcal{N}(x_0 \mid c,0.5^2)\mathcal{N}(y \mid x_0, 1)$, and the DPS update is 
    $x_{t-1} = x_t + \eta_t \left[\hat x_\theta(x_t, c, \sigma_t) - x_t + \sigma_t^2 \nabla_{x_t} \log N(y \mid \mathbb{E}[x_0 \mid x_t], 1)\right]$. 
    
    \item When $w=100$, the exact posterior is $p(y \mid x_0, c)\propto \mathcal{N}(x_0 \mid c, 0.5^2)\mathcal{N}(y \mid x_0,1)^w$, and the DPS update is 
     $x_{t-1} = x_t + \eta_t \left[\hat x_\theta(x_t, c, \sigma_t) - x_t + \sigma_t^2 100 \nabla_{x_t} \log N(y \mid \mathbb{E}[x_0 \mid x_t], 1)\right]$.

\end{itemize}

The EmbedOpt results in \Cref{fig:overview} (a) are obtained from  running \Cref{alg:embed-opt} with  $\alpha_t= \alpha \frac{1}{\|\nabla_{c_t} R(\hat x_0)\|}$ and $\alpha=0.1$. In practice we found a range of $\alpha$ from $0.05$ to $5$ to work well, that is, to be able to maximize the log likelihood.

\section{Additional Related Works}\label{app:sec:additional-related-works}

\paragraph{Other Machine Learning Methods for Solving Structure-Determination Inverse Problems} 
Parallel efforts in the deterministic AlphaFold 2 era, such as \citet{fadini2025alphafold}, explored optimizing latent coevolutionary embeddings to align predictions with experimental measurements. Other strategies integrate constraints through parameter-wise fine-tuning~\cite{xie2025integrating, stahl2023protein, zhang2025distance} or specialized architectures like ModelAngelo \citep{jamali2024automated}. 
Compared to these prior approaches, EmbedOpt is a flexible, inference-time method that exploits the rich generative capacity of state-of-the-art AlphaFold~3-style diffusion models without the computational cost of model fine-tuning. 

\paragraph{General Diffusion Inference-Time Steering.}
Beyond the biomolecular context, a broad class of methods steer diffusion models at inference time. At one end are guidance-based techniques such as classifier-free guidance~\citep{ho2022classifier} or DPS for inverse problems~\citep{chung2022diffusion}. At the opposite extreme are methods requiring additional training~\citep[e.g.][]{domingo2024adjoint}, offering stronger adaptation at higher computational cost. Between these lies a middle ground of training-free but more sophisticated inference-time methods, such as sequential Monte Carlo–based approaches~\citep{wu2023practical,ren2025driftlite}. 

A distinct line of work is prompt-tuning in text-to-image diffusion models, which steers generation by optimizing the conditioning prompt to maximize image-level rewards~\citep{hao2023optimizing,chung2023prompt}. While conceptually similar to EmbedOpt --- both optimize a conditional embedding --- these approaches typically rely on iterative embedding refinement or reinforcement learning loops, incurring substantially higher computational cost.

\section{Experiment Details}\label{app:sec:experiment}

\subsection{Post-processing via Energy Relaxation}\label{app:subsec:energy-relaxation} 
We apply a physics-based energy relaxation as a post-processing step to generated samples from the Protenix prior model, DPS and EmbedOpt, following prior work \citep{jumper2021highly, maddipatla2025inverse}. More specifically,  each sampled structure is completed with hydrogen atoms and locally relaxed via energy minimization under a classical AMBER force field \citep{ponder2003force}. 

Empirically, we find this step to be computationally lightweight. For most settings of EmbedOpt and DPS, applying energy relaxation produce physically plausible structures while largely preserving the experimental constraints, with only a small decrease in the corresponding reward value. However, for some settings, notably DPS with a large learning rate $\alpha$ (\Cref{app:fig:cryoem_failure_case}), energy relaxation is insufficient to repair the resulting globally broken geometry. We provide a detailed ablation study on the synthetic map benchmark below.

\subsubsection{Ablation Study: Effect of Energy Relaxation}
\label{app:sec:energy_relaxation}

We analyze the effect of energy relaxation on the synthetic map benchmark. We group steering methods, DPS and EmbedOpt, by learning rate (LR) regime: low ($\text{LR} < 0.01$, near-prior), moderate ($0.01 \le \text{LR} < 0.5$), and high ($\text{LR} \ge 0.5$. 
For each group we report median map correlation and MolProbity score before and after relaxation, together with the paired per-sample difference (relaxed $-$ unrelaxed). Brackets show the interquartile range (IQR). Results are summarized in  \Cref{tab:relax_cc,tab:relax_molprobity}. 

\begin{table}[!t]
    \centering
    \caption{\textbf{Ablation study: effect of energy relaxation on the synthetic map benchmark (metric: map correlation $\uparrow$).} Median [IQR] results reported. The \emph{Paired difference} column reports the per-sample difference (relaxed $-$ unrelaxed); negative values indicate that relaxation degrades map fit.}
    \label{tab:relax_cc}
    \vspace{0.5em}
    \begin{tabular}{lccc}
        \toprule
        Method & Unrelaxed & Relaxed & Paired difference \\
        \midrule
        Prior                                       & $0.68$ \,[$0.47$, $0.82$] & $0.64$ \,[$0.44$, $0.79$] & $-0.03$ \,[$-0.04$, $-0.02$] \\
        \midrule
        DPS, $\text{LR} < 0.01$                     & $0.73$ \,[$0.52$, $0.84$] & $0.70$ \,[$0.50$, $0.81$] & $-0.03$ \,[$-0.04$, $-0.02$] \\
        DPS, $0.01 \le \text{LR} < 0.5$             & $0.84$ \,[$0.69$, $0.92$] & $0.80$ \,[$0.66$, $0.89$] & $-0.03$ \,[$-0.04$, $-0.03$] \\
        DPS, $\text{LR} \ge 0.5$                    & $0.30$ \,[$0.00$, $0.49$] & $0.01$ \,[$-0.01$, $0.05$] & $-0.20$ \,[$-0.36$, $-0.01$] \\
        \midrule
        EmbedOpt, $\text{LR} < 0.01$                & $0.80$ \,[$0.65$, $0.90$] & $0.77$ \,[$0.61$, $0.86$] & $-0.03$ \,[$-0.04$, $-0.03$] \\
        EmbedOpt, $0.01 \le \text{LR} < 0.5$        & $0.95$ \,[$0.86$, $0.98$] & $0.89$ \,[$0.79$, $0.94$] & $-0.04$ \,[$-0.06$, $-0.03$] \\
        EmbedOpt, $\text{LR} \ge 0.5$               & $0.98$ \,[$0.93$, $0.99$] & $0.89$ \,[$0.73$, $0.94$] & $-0.09$ \,[$-0.20$, $-0.05$] \\
        \bottomrule
    \end{tabular}
\end{table}

\begin{table}[!t]
    \centering
    \caption{\textbf{Ablation study: effect of energy relaxation on the synthetic map benchmark  (metric: MolProbity score $\downarrow$).} Median [IQR] results reported. The \emph{Paired difference} column reports the per-sample difference (relaxed $-$ unrelaxed); negative values indicate that relaxation improves geometry.}
    \label{tab:relax_molprobity}
    \vspace{0.5em}
    \begin{tabular}{lccc}
        \toprule
        Method & Unrelaxed & Relaxed & Paired difference \\
        \midrule
        Prior                                       & $1.54$ \,[$1.42$, $1.69$] & $0.70$ \,[$0.60$, $0.83$] & $-0.85$ \,[$-0.99$, $-0.70$] \\
        \midrule
        DPS, $\text{LR} < 0.01$                     & $1.55$ \,[$1.39$, $1.71$] & $0.71$ \,[$0.61$, $0.83$] & $-0.84$ \,[$-0.98$, $-0.70$] \\
        DPS, $0.01 \le \text{LR} < 0.5$             & $1.52$ \,[$1.37$, $1.76$] & $0.71$ \,[$0.59$, $0.88$] & $-0.79$ \,[$-0.96$, $-0.65$] \\
        DPS, $\text{LR} \ge 0.5$                    & $4.48$ \,[$4.34$, $4.66$] & $3.27$ \,[$2.72$, $3.97$] & $-1.07$ \,[$-1.77$, $-0.41$] \\
        \midrule
        EmbedOpt, $\text{LR} < 0.01$                & $1.56$ \,[$1.40$, $1.71$] & $0.70$ \,[$0.61$, $0.83$] & $-0.84$ \,[$-1.00$, $-0.71$] \\
        EmbedOpt, $0.01 \le \text{LR} < 0.5$        & $1.92$ \,[$1.60$, $2.43$] & $0.77$ \,[$0.64$, $0.90$] & $-1.14$ \,[$-1.53$, $-0.90$] \\
        EmbedOpt, $\text{LR} \ge 0.5$               & $3.60$ \,[$2.77$, $4.35$] & $1.37$ \,[$0.89$, $2.26$] & $-1.95$ \,[$-2.29$, $-1.58$] \\
        \bottomrule
    \end{tabular}
\end{table}

 Overall, energy relaxation consistently improves physical plausibility  across most methods, while only slightly decreasing map correlation. The exception is DPS at high LR, which either produces structures that cannot be validated by Phenix or shows limited improvement after relaxation. We summarize the key findings below (medians reported).

\begin{itemize}
\item 
\textbf{Map correlation (higher is better):} Energy relaxation only slightly decreases the map correlation by approximately 0.03 - 0.04 for the prior and for DPS and EmbedOpt at low-to-moderate LRs. 
At high LRs $(\geq 0.5)$,  EmbedOpt shows a larger drop (-0.09) but still achieves strong final performance (0.89 vs. 0.64 for the prior). In contrast, DPS collapses (0.30 $\to$ 0.01), performing worse than the unguided prior even before relaxation.
\item 	\textbf{Physical plausibility (via Molprobity score, lower is better):}  
Energy relaxation consistently improves MolProbity scores by $\approx$ 0.8–0.9 for  DPS and EmbedOpt at low LRs, indicating a uniform effect in this regime. At moderate LRS, 
    DPS shows similar improvement (-0.79), while EmbedOpt benefits more substantially (-1.14), leading to better final scores (0.77 vs. 0.71 - 0.88).

At high LRs  ($\geq 0.5$), the behavior is completely different. EmbedOpt has large improvements from relaxation (-1.95), yielding a strong final score (1.37); In contrast, 
DPS with large LR produce 61.3\% degenerate structures that cannot be validated by Phenix before relaxation, and 5.6\% after. In comparison, EmbedOpt produces only 1 (0.2\%) degenerate sample at large LR. For the structure that can be validated, the Molprobity score improvement is limited (-1.07) and remains poor after relaxation (3.27), where improvements are computed on paired samples where both pre- and post-relaxation structures are valid.
\end{itemize}

\subsection{Cryo-EM Map Fitting Benchmark} \label{app:sec:cryoem}
\paragraph{Forward Model.} The differentiable forward model $V(\cdot)$ which renders a denoised structure $x_0$ to a map $V(x_0)$ is implemented with \texttt{SFC\_Torch} \cite{li2025sfcalculator} operating in its dedicated Cryo-EM mode. We first compute the Fourier-space structure factors $\mathbf{F}(\vec{h})$ representing the electrostatic potential by summing the electron scattering contributions of individual atoms:

\begin{equation}
    \mathbf{F}(\vec{h})=\sum_j O_j \cdot f_{\vec{h}, j} \cdot \operatorname{DWF}(\vec{h}) \cdot \exp \left[2 \pi i \vec{h} \cdot \vec{x}_j\right]
\end{equation}

where $j$ indexes the atoms, $O_j$ denotes occupancy (fixed at 1.0), and $x_j$ represents the fractional coordinates. The term $f_{\vec{h}, j}$ is the elastic atomic scattering factor for electrons, parameterized via a 5-Gaussian expansion \citep{peng1996robust, prince2004international}. This electron-specific parameterization is essential for accurately modeling Cryo-EM density and is the standard adopted by gold-standard validation suites such as \texttt{phenix.validation\_cryoem} \citep{afonine2018new}. Furthermore, explicitly modeling these Fourier-space intermediates allows for the mathematically rigorous application of the Debye-Waller factor $\mathrm{DWF}(\vec{h})$ (corresponding to a uniform atomic B-factor of $50 \text{\AA}^2$) and the precise application of a high-frequency resolution cutoff ($5.0 \text{\AA}$). Finally, an inverse FFT is applied to recover the real-space voxel intensities on the target grid.

\subsection{Distance-Constrained Structure Determination Benchmarks}

\paragraph{Forward Model.} We define the constraint set by identifying the atomic indices of the top $K=20$ residue pairs that exhibit the largest distance deviation between the unguided prior predictions and the ground truth PDB structure. The forward operator is then defined as the computation of the pairwise Euclidean distances for these specific atomic indices from the denoised structure $x_0$.

\begin{figure}[ht]
    \centering
    \includegraphics[width=1.0\linewidth]{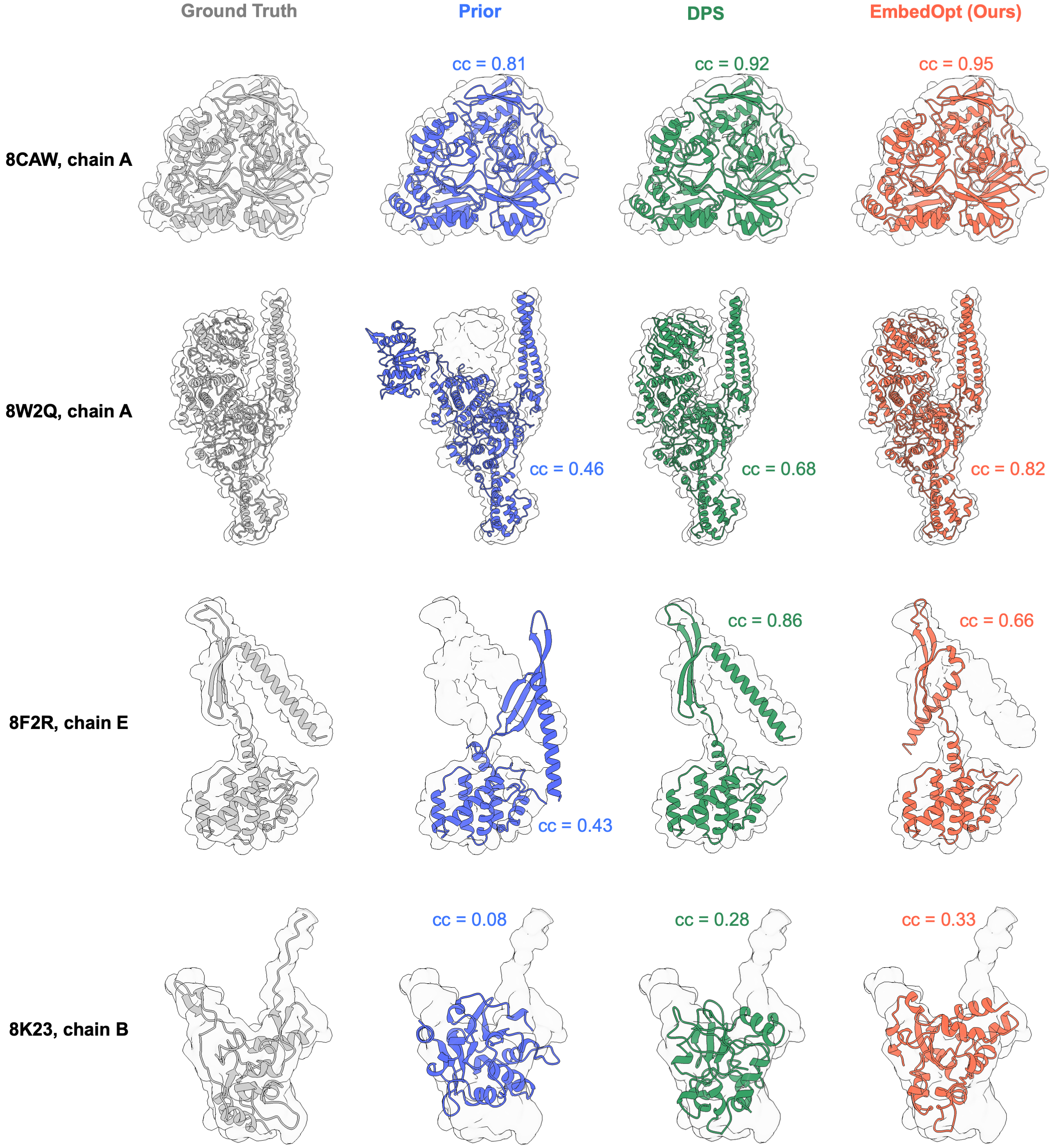}
    \caption{\textbf{Cryo-EM Map Fitting Benchmark: Sample Gallery of Representative Results across Test Systems (Hyperparameter-Tuned).} Structures display the best samples from each method following hyperparameter sweeping. Both methods perform robustly on targets where the unguided prior is already well-aligned with the target map (e.g., 8CAW). However, for targets requiring significant global conformational rearrangement (e.g., 8W2Q), EmbedOpt consistently achieves higher map correlation. We note 8F2R as the unique outlier where DPS significantly outperforms EmbedOpt, and 8K23 as a failure case where neither method successfully recovers the target structure.}
    \label{app:fig:cryoem_examples}
\end{figure}

\begin{figure}[ht]
    \centering
    \includegraphics[width=1.0\linewidth]{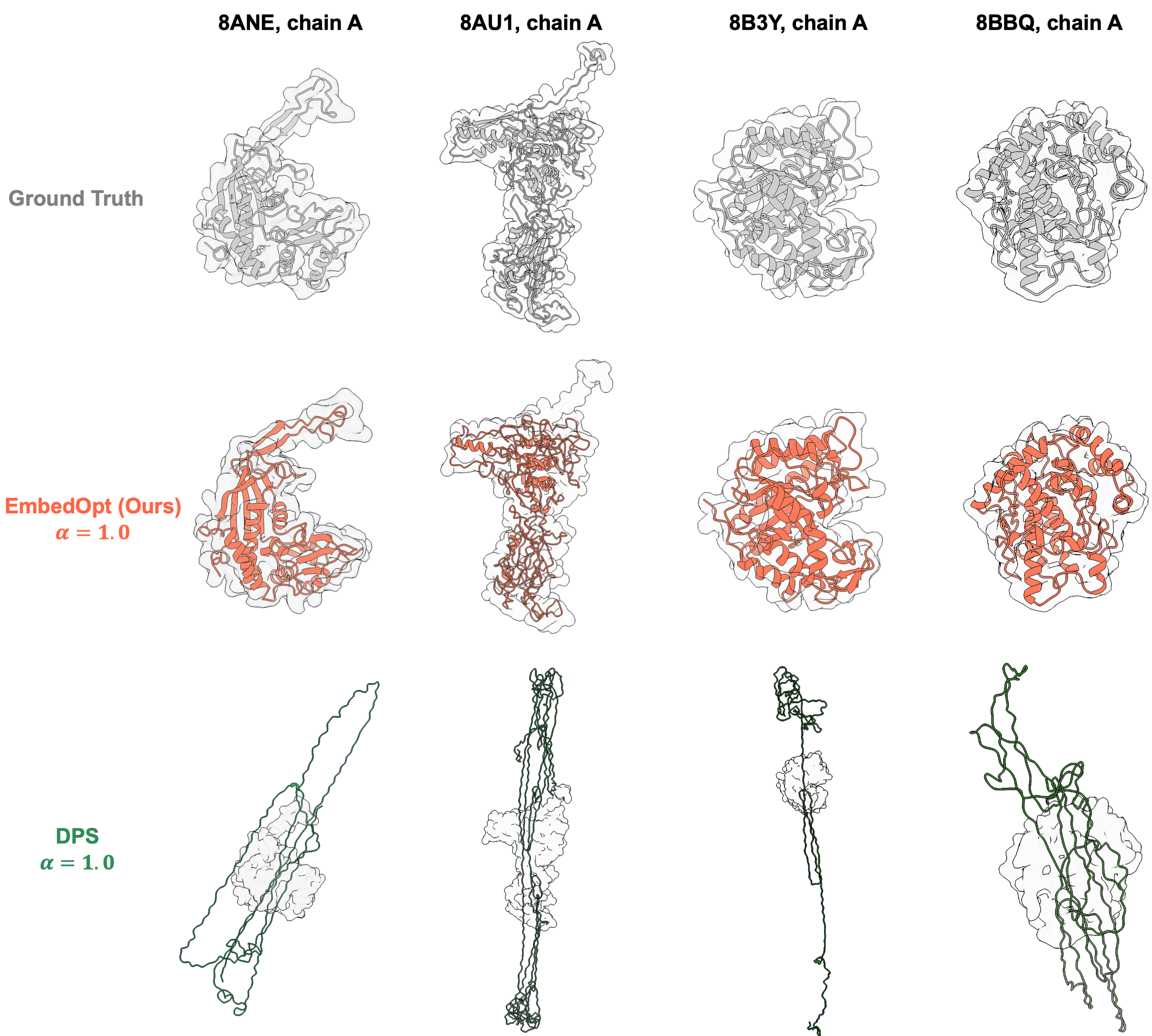}
    \caption{\textbf{Cryo-EM Map Fitting Benchmark: Sample Gallery of EmbedOpt and DPS Failure Modes under High Learning Rates.} We visualize the impact of high learning rates ($\alpha$=1.0) on generation quality for both methods. DPS (bottom) that directly steers  noisy coordinates can push trajectories off the data manifold, resulting in unphysical, unraveled structures that defy energy relaxation. In contrast, EmbedOpt (middle) remains structurally coherent even in this aggressive regime.}
    \label{app:fig:cryoem_failure_case}
\end{figure}

\begin{figure}[ht]
    \centering
    \includegraphics[width=0.4\linewidth]{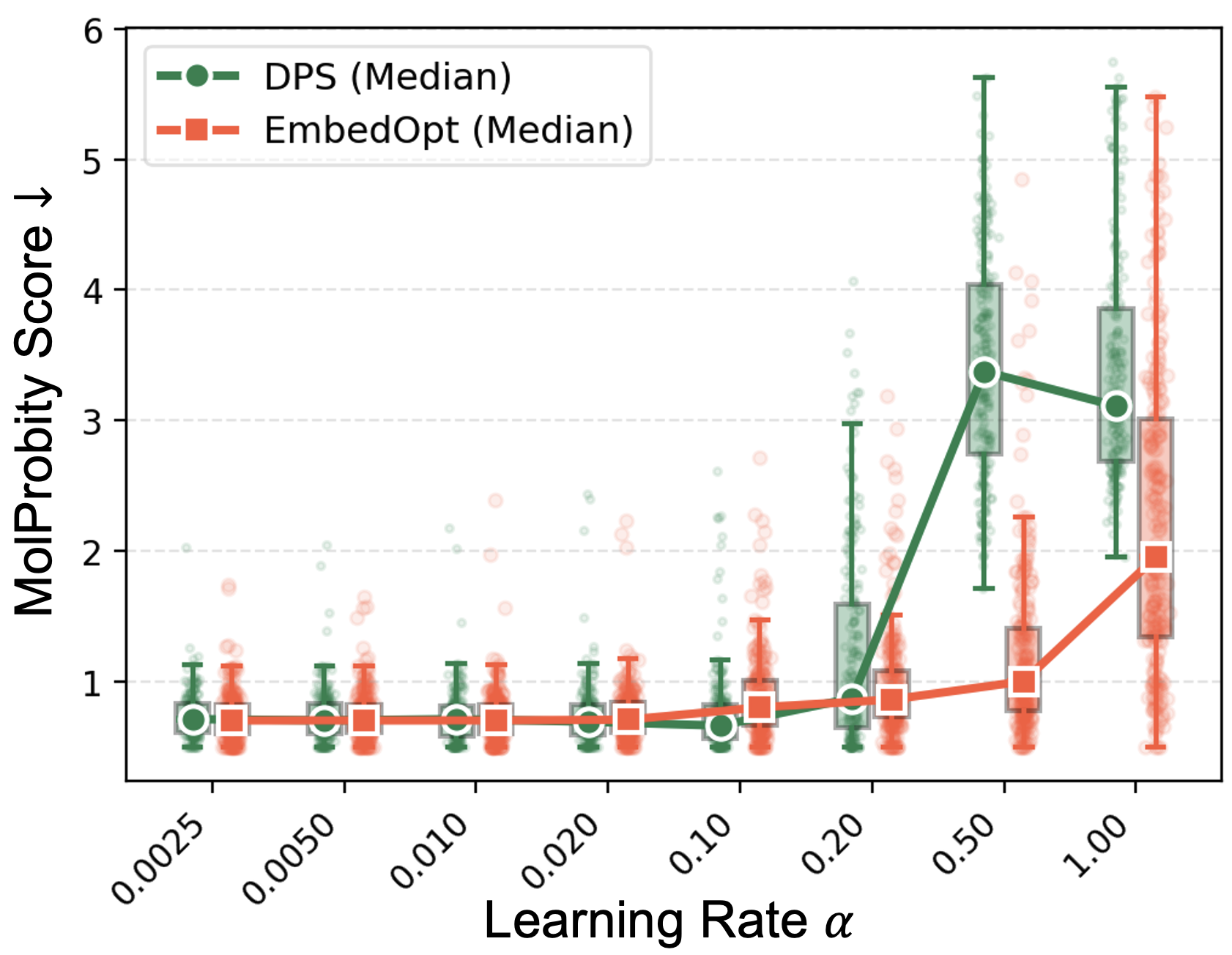}
    \caption{\textbf{Cryo-EM Map Fitting Benchmark: Stereochemical Quality Analysis.} Comparison of MolProbity scores (lower is better) across varying learning rates. EmbedOpt preserves structural validity even at high learning rates, whereas DPS suffers from severe geometric degradation when $\alpha >$ 0.1.}
    \label{app:fig:cryoem_molprobity_score}
\end{figure}

\begin{figure}[ht]
    \centering
    \includegraphics[width=0.97\linewidth]{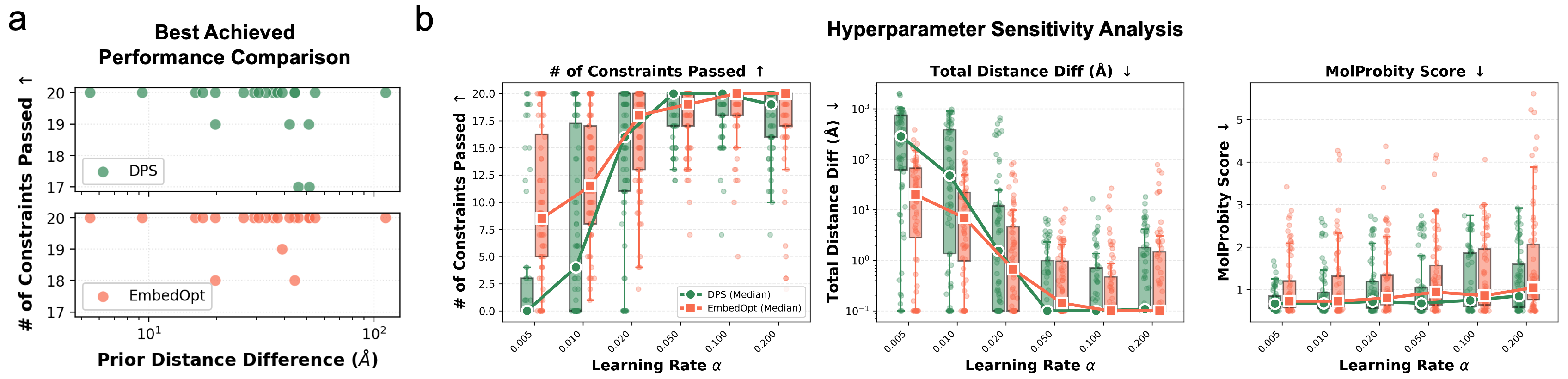}
    \caption{\textbf{Distance-Constrained Structure Determination Benchmark: Performance Analysis.} (a) \textbf{Best-Achieved Performance}: Comparison of constraint satisfaction on the $K=20$ constraint benchmark, where systems are ordered by task difficulty (initial deviation between prior and target distances per constraint). Unlike the unimodal likelihood in Cryo-EM task, sparse constraints allow both methods to achieve comparable peak performance, with both satisfying all constraints for the majority of targets. (b) \textbf{Hyperparameter Sensitivity}: Metric distributions across varying learning rates ($\alpha$) and fixed 200 steps for \# of Constraints Passed (left, higher is better), Total Distance Violation (middle, lower is better), and MolProbity Score (right, lower is better). EmbedOpt (orange) demonstrates better robustness, maintaining high constraint adherence and valid geometries (low MolProbity scores) across a broad learning rate spectrum. Box plots denote the interquartile range (IQR) with median bars; whiskers extend to $1.5\times$ IQR. Medians are highlighted. }
    \label{app:fig:af_distance_results}
\end{figure}

\subsection{Real-World Cryo-EM Map Fitting Benchmark}

\paragraph{CryoBoltz Baseline Protocol.}\label{app:par:cryoboltz_protocol} To provide a fair and direct comparison, we run CryoBoltz using their official codebase under default settings. In particular, CryoBoltz runs 200 diffusion steps while EmbedOpt uses 100 steps. We evaluate CryoBoltz on all six experimental cryo-EM targets using the same input data supplied to EmbedOpt: the identical cleaned density maps (cropping out background regions of the map under a threshold), protein sequences, and multiple sequence alignments. For each system, five independent guided structure predictions were generated using the official CryoBoltz inference pipeline. Validation metrics were evaluated on the raw CryoBoltz predictions (unrelaxed) as well as after post-prediction energy relaxation---structure preparation with PDBFixer followed by energy minimization with OpenMM---to assess the effect of geometry refinement and ensure the comparison reflects CryoBoltz at its best. All validation metrics (CC mask, MolProbity score, Ramachandran/rotamer statistics, and clashscore) were computed using the same Phenix-based pipeline (\texttt{phenix.validation\_cryoem} and \texttt{phenix.molprobity}) applied uniformly to both methods. Structural similarity to the deposited reference (TM-score, RMSD C$\alpha$, RMSD all-atom) was computed using Biotite after Kabsch superposition. This ensures that any observed differences in performance reflect the methods themselves rather than differences in input data, post-processing, or evaluation protocol. 

\begin{figure}[ht]
    \centering
    \includegraphics[width=1.0\linewidth]{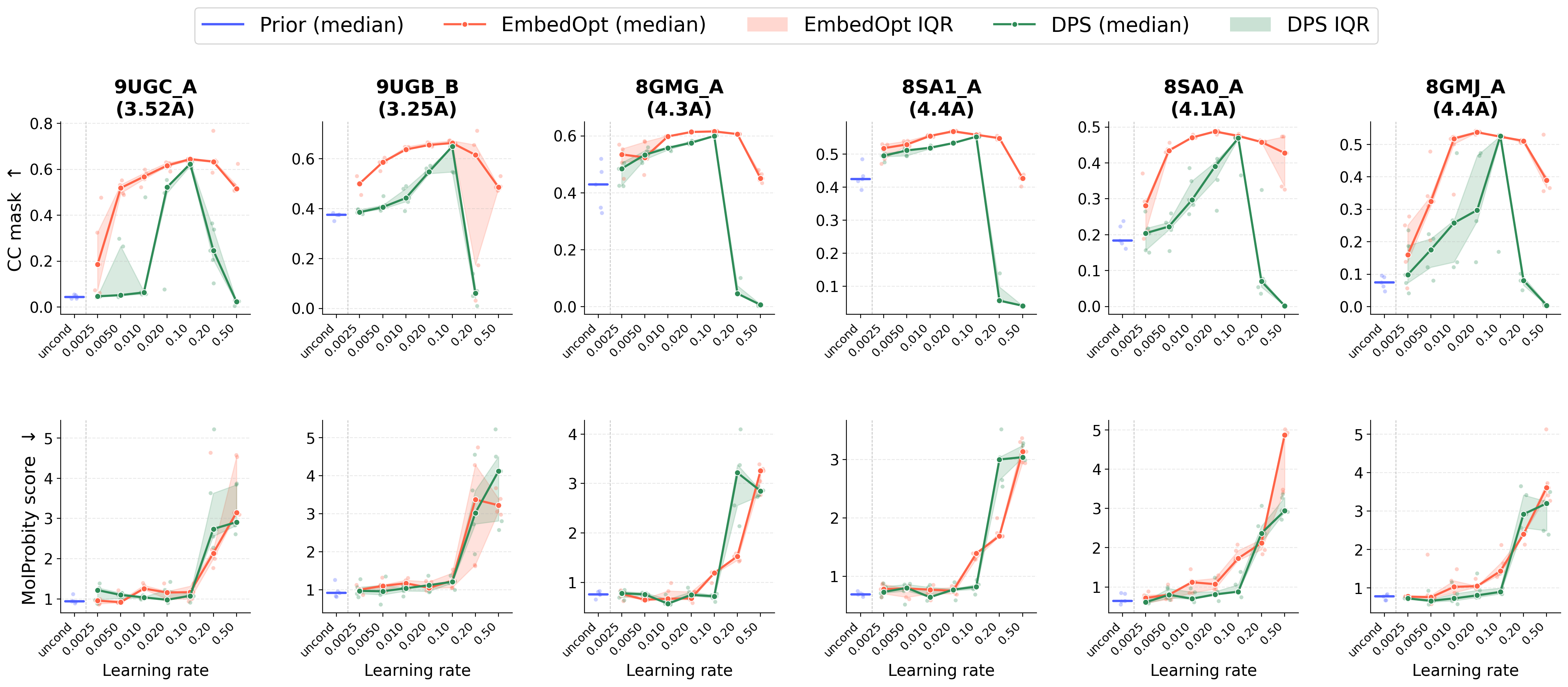}
    \caption{\textbf{Learning-Rate Sensitivity on Real Cryo-EM Targets.} Comparison of EmbedOpt and DPS across six experimental cryo-EM targets from the CryoBoltz benchmark. We sweep learning rates from 0.0025 to 0.5. \textbf{Top:} CC mask (higher is better) shows that EmbedOpt consistently improves map correlation relative to the unguided prior. While DPS shows a similar trend, it exhibits greater sensitivity to the guidance scale and deteriorates at high values. \textbf{Bottom:} MolProbity score (lower is better) indicates that EmbedOpt maintains stable, high-quality geometry across a broad range of learning rates, whereas DPS degrades significantly at higher values. Solid lines connect median values across seeds, shaded bands span the interquartile range, and individual seeds are shown as dots.}
    \label{app:fig:real_map_lr_sensitivity}
\end{figure}

\begin{figure}[!t]
    \centering
    \includegraphics[width=1.0\linewidth]{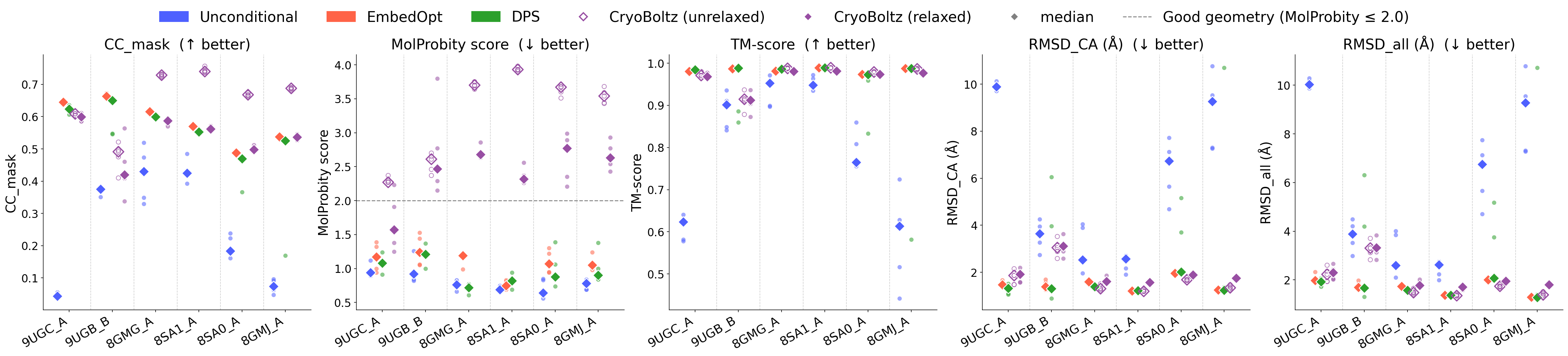}
    \caption{\textbf{Comprehensive Validation on Real Cryo-EM Targets, including DPS.} Evaluation of EmbedOpt against the unguided prior, DPS (within the Protenix prior), and the published CryoBoltz baseline across six experimental targets. EmbedOpt and DPS are evaluated using a consistent learning rate of $\alpha=0.1$, while CryoBoltz uses default published settings. Unrelaxed CryoBoltz predictions show pronounced reward-fitting artifacts—achieving high CC mask but with highly degraded geometry (MolProbity score $\gg 2.0$). Applying the same energy relaxation protocol used for EmbedOpt and DPS improves CryoBoltz's geometry but remains suboptimal, as severe stereochemical violations cannot be fully repaired by local optimization. EmbedOpt natively maintains excellent stereochemical quality (MolProbity score $<2.0$) at this learning rate. Comparing the relaxed structures, EmbedOpt outperforms both CryoBoltz and DPS in map correlation (CC mask) and global structural accuracy (TM-score, RMSD) on 5 out of 6 systems. EmbedOpt and DPS achieve comparable MolProbity scores. On the remaining system (8SA0), CryoBoltz achieves a marginally higher CC mask, but at the cost of significantly worse geometry. Individual seed predictions are shown as small dots, with medians indicated by diamond markers.}
    \label{app:fig:cryoboltz_comparison_si}
\end{figure}

\subsection{Extended Failure-Mode Analysis}\label{app:sec:failure_modes}

We expand on the four failure regimes summarized in the main-text discussion, illustrating each with concrete per-system evidence and outlining proposed remedies. The figures referenced below are introduced earlier in this appendix.

\paragraph{Manifold Coverage.} The cleanest illustration is target 8K23 from the synthetic cryo-EM benchmark (\Cref{app:fig:cryoem_examples}), where \emph{neither} EmbedOpt nor DPS recovers the deposited structure regardless of the learning rate used. Embedding optimization is fundamentally bounded by the support of the pretrained model: gradient ascent on the conditional embedding $c$ cannot place mass on conformations the model has never associated with sequences resembling the target. This is structurally analogous to the MSA-limited regime studied in \Cref{app:sec:ablation_msa}, where removing evolutionary information collapses both the prior and any inference-time steering of it. Both observations point to the same remedy: closing the manifold-coverage gap requires expanded pretraining---e.g.\ on conformationally diverse simulation data or harder negatives during MSA dropout---rather than any change to the inference-time procedure.

\paragraph{Local Optima.} On a small subset of synthetic cryo-EM targets, DPS attains a higher peak CC than EmbedOpt (e.g.\ 8F2R, \Cref{app:fig:cryoem_examples}). Inspection of the optimization traces suggests this is not a refutation of the embedding-vs-coordinate argument but a complementary observation: DPS's stochastic, noise-amplified coordinate updates act as a thermal perturbation that occasionally escapes narrow basins, while EmbedOpt's preconditioned, deterministic ascent commits decisively to the basin it enters. The same property that delivers EmbedOpt's headline robustness (smooth, monotone surrogate reward, \Cref{fig:distance_constraint}a) is also what penalizes it in this regime. This points concretely to hybrid two-stage schemes that we view as the most promising near-term extension: use EmbedOpt to shift the prior into a high-likelihood neighborhood of the target, then warm-start DPS (or inject controlled coordinate-space noise into EmbedOpt itself) to locally explore alternative basins. Such a pipeline would inherit EmbedOpt's stable global search and DPS's local mode-switching, while eliminating DPS's brittle cold-start behavior on the large prior--likelihood mismatches it currently struggles with (\Cref{fig:map_reward}a).

\paragraph{Reward-fitting at Extreme Learning Rates.} EmbedOpt's robustness band is wide but not infinite. At the upper end of the learning-rate sweeps (\Cref{fig:map_reward}c, \Cref{app:fig:cryoem_molprobity_score}), MolProbity scores eventually rise even for EmbedOpt, indicating that with a strong enough push the optimizer can be driven outside the model's structural manifold. The energy-relaxation step we apply uniformly to all methods (\Cref{app:subsec:energy-relaxation}) repairs mild violations but cannot fix coarse stereochemical errors once they accumulate. The CryoBoltz-relaxed comparison in \Cref{app:fig:cryoboltz_comparison_si} makes this concrete: severe pre-relaxation reward-fitting artifacts persist as elevated MolProbity even after the same OpenMM minimization protocol. Two principled remedies are worth pursuing: (i) explicit physics-aware regularization at inference time (e.g.\ adding a soft AMBER-energy penalty to the surrogate reward), and (ii) modifying the pretraining objective itself so that the conditioning module is explicitly regularized into a smooth latent space, e.g.\ via a variational bottleneck or contrastive structural objective. Both directions would extend the effective trust region beyond what post-hoc relaxation can achieve.


\subsection{Runtime and GPU Memory Profiling}
\label{app:sec:runtime_profiling}

\paragraph{Theoretical Analysis of Computational Complexity.} We first reason about the per-step cost of the prior model, DPS, and EmbedOpt in terms of forward and backward passes through the denoiser $\hat{x}_\theta$, which dominates the cost of all three samplers. The unguided prior requires a single forward pass per step, with no gradient computation. DPS adds a backward pass through $\hat{x}_\theta$ to compute $\nabla_{x_t} R(\hat{x}_0)$, for one forward and one backward pass per step. EmbedOpt requires the same forward-backward pair to compute the embedding gradient $\nabla_{c_t} R(\hat{x}_0)$, plus a second forward pass under the updated embedding to produce the sample. Using the standard rule of thumb that a backward pass costs roughly $2\times$ a forward pass in FLOPs, the per-step cost is approximately $C_F$ for the prior, $3 C_F$ for DPS, and $4 C_F$ for EmbedOpt, where $C_F$ is the runtime of one forward pass. This predicts EmbedOpt being $\sim$$4\times$ the prior and $\sim$$1.33\times$ DPS in runtime. For peak memory, both DPS and EmbedOpt are dominated by the activations stored from the single forward pass that is backpropagated through, so they should be comparable; the prior, which does no backward pass, is correspondingly lighter. On top of this shared baseline, however, EmbedOpt's pair-embedding gradient contributes an  $\mathcal{O}(N_{\text{res}}^2)$
 memory term that DPS's coordinate gradient ($\mathcal{O}(N_{\text{res}})$)
 lacks, predicting that EmbedOpt's memory overhead grows faster with system size.

\paragraph{Empirical Setup.} We profile EmbedOpt against the unguided prior and DPS on three targets spanning a range of system sizes: \texttt{8K23\_B} (177 residues) and \texttt{8P4K\_A} (599 residues) from the synthetic map benchmark, and \texttt{8GMG\_A} (1280 residues; P-glycoprotein / ABCB1, with the cropped EMD-40026 map at $4.3$\,\AA{} resolution) from the real map benchmark -- the largest system tested in this work. We additionally include the CryoBoltz baseline on \texttt{8GMG\_A} (its target in this work). All experiments use an NVIDIA H100 PCIe (80\,GB), a single sample per run ($N{=}1$), learning rate $0.1$ for EmbedOpt and DPS, and a $5$\,\AA{} map for the two smaller synthetic systems. Runtimes are averaged over 3 seeds and measured inside the diffusion loop using \texttt{torch.cuda.synchronize()}, excluding model loading, MSA processing,  the trunk forward pass and the post-processing energy relaxation --- these costs are shared across all methods, so the reported numbers reflect the \emph{marginal} cost of sampling rather than end-to-end wallclock. Peak memory is recorded with \texttt{torch.cuda.max\_memory\_allocated()} over the same loop.

\begin{table}[!t]
    \centering
    \resizebox{\textwidth}{!}{%
    \begin{tabular}{lcccccc}
        \toprule
        & \multicolumn{2}{c}{\texttt{8K23\_B} (177 res.)} & \multicolumn{2}{c}{\texttt{8P4K\_A} (599 res.)} & \multicolumn{2}{c}{\texttt{8GMG\_A} (1280 res.)} \\
        \cmidrule(lr){2-3} \cmidrule(lr){4-5} \cmidrule(lr){6-7}
        Method (base model) & Time (s) & Mem (GB) & Time (s) & Mem (GB) & Time (s) & Mem (GB) \\
        \midrule
        Prior (Protenix)     & $3.9 \pm 0.1$  & $2.10$ & $7.6 \pm 0.1$  & $5.33$ & $22.1 \pm 0.0$ & $10.10$ \\
        DPS (Protenix)       & $16.1 \pm 0.3$ & $1.97$ & $18.5 \pm 0.2$ & $5.93$ & $53.2 \pm 0.4$ & $23.90$ \\
        EmbedOpt (Protenix)  & $23.5 \pm 0.2$ & $2.10$ & $30.3 \pm 0.3$ & $7.58$ & $97.1 \pm 0.2$ & $30.16$ \\
        CryoBoltz (Boltz-1)  & /              & /      & /              & /      & $98.7 \pm 0.5$ & $27.12$ \\
        \bottomrule
    \end{tabular}%
    }
    \vspace{0.5em}
    \caption{\textbf{Per-trajectory diffusion-loop runtime and peak GPU memory of the unguided prior, DPS, EmbedOpt, and the CryoBoltz baseline on three targets spanning a range of system sizes.} Runtime is reported in seconds (mean $\pm$ standard deviation over 3 seeds); peak GPU memory is reported in gigabytes. Measured on an NVIDIA H100 PCIe; each run generates a single diffusion trajectory. Prior, DPS, and EmbedOpt use $100$ diffusion steps; CryoBoltz uses its native $200$ steps. CryoBoltz was profiled on \texttt{8GMG\_A} only; ``/'' denotes not run.}
    \label{tab:runtime_memory}
\end{table}

\paragraph{Empirical Results.} Results are summarized in \Cref{tab:runtime_memory} and broadly match the theoretical predictions. On the two larger targets, EmbedOpt is $4.0$--$4.4\times$ the prior and $1.6$--$1.8\times$ DPS, close to the predicted $4\times$ and $1.33\times$; on the smaller \texttt{8K23\_B}, the relative overheads are larger ($6.0\times$ and $4.1\times$ over the prior, respectively), reflecting fixed per-step overheads from gradient setup and reward evaluation that are amortized less effectively when $C_F$ is small. Memory follows the predicted scaling: at the smallest system, EmbedOpt and DPS are essentially indistinguishable from the prior, but the EmbedOpt-vs-DPS gap grows from $1.7$\,GB at $N_{\text{res}}{=}599$ to $6.3$\,GB at $N_{\text{res}}{=}1280$, consistent with the $\mathcal{O}(N_{\text{res}}^2)$ pair-embedding-gradient term. On the largest target, EmbedOpt's peak memory reaches $30.2$\,GB ($3.0\times$ the prior), still well within the capacity of a single H100. The CryoBoltz baseline on \texttt{8GMG\_A} reaches a per-step diffusion-loop cost comparable to EmbedOpt at twice the step count ($98.7$\,s for $200$ steps vs. $97.1$\,s for $100$ steps), so per step it is roughly half the cost of EmbedOpt; in peak memory it falls between DPS and EmbedOpt ($27.1$\,GB), since CryoBoltz backpropagates through the structure module but not the trunk.

\paragraph{End-to-end Wall-clock.} Beyond the marginal sampling cost, we also measure end-to-end wall-clock time on the same hardware --- the elapsed time of a complete inference job, including environment activation, Python startup, model loading, MSA processing, the trunk forward pass at $N_{\text{cycle}}{=}10$, the diffusion loop, and output writing (\Cref{tab:runtime_e2e}).  CryoBoltz's pipeline overhead (its own MSA featurization, model-weight cache check, Lightning Trainer/DataLoader setup, and manifest handling) is substantially heavier than ours, so although its diffusion loop is similar in cost to EmbedOpt's, its end-to-end wall-clock on \texttt{8GMG\_A} ($649$\,s) is $\sim$$2.7\times$ that of EmbedOpt ($244$\,s) and $\sim$$3.5\times$ that of the unguided prior ($188$\,s). For our methods, the fixed per-job overhead is approximately $150$\,s on the largest system and amortizes better as the diffusion loop lengthens (overhead share drops from $\sim$$88\%$ for the prior to $\sim$$60\%$ for EmbedOpt on \texttt{8GMG\_A}); for the smaller systems, end-to-end is dominated by overhead ($>80\%$) and is therefore comparable across the three methods.

\begin{table}[!t]
    \centering
    \begin{tabular}{lccc}
        \toprule
        & \multicolumn{3}{c}{End-to-End Time (s)} \\
        \cmidrule(lr){2-4}
        Method (base model) & \texttt{8K23\_B} (177 res.) & \texttt{8P4K\_A} (599 res.) & \texttt{8GMG\_A} (1280 res.) \\
        \midrule
        Prior (Protenix)    & $\phantom{0}96 \pm 1 $  & $122 \pm \phantom{0}9$  & $188 \pm \phantom{0}4$ \\
        DPS (Protenix)      & $112 \pm 2$            & $132 \pm 17$            & $205 \pm \phantom{0}7$ \\
        EmbedOpt (Protenix) & $118 \pm 2$            & $132 \pm \phantom{0}2$  & $244 \pm \phantom{0}1$ \\
        CryoBoltz (Boltz-1) & /                      & /                       & $649 \pm 84$ \\
        \bottomrule
    \end{tabular}
    \vspace{0.5em}
    \caption{\textbf{End-to-end wall-clock time per inference job} on the same NVIDIA H100 PCIe. Runtime is reported in seconds (mean $\pm$ standard deviation over 3 seeds). Includes environment activation, Python startup, checkpoint loading, MSA processing, trunk forward pass, diffusion loop, and output writing. Prior, DPS, and EmbedOpt use $100$ diffusion steps; CryoBoltz uses its native $200$ steps. CryoBoltz was profiled on \texttt{8GMG\_A} only; ``/'' denotes not run.}
    \label{tab:runtime_e2e}
\end{table} 

\subsection{Ablation Studies on Varying MSA Depths}
\label{app:sec:ablation_msa}

\begin{figure}[!t]
    \centering
    \begin{subfigure}[t]{0.48\linewidth}
        \centering
        \includegraphics[width=\linewidth]{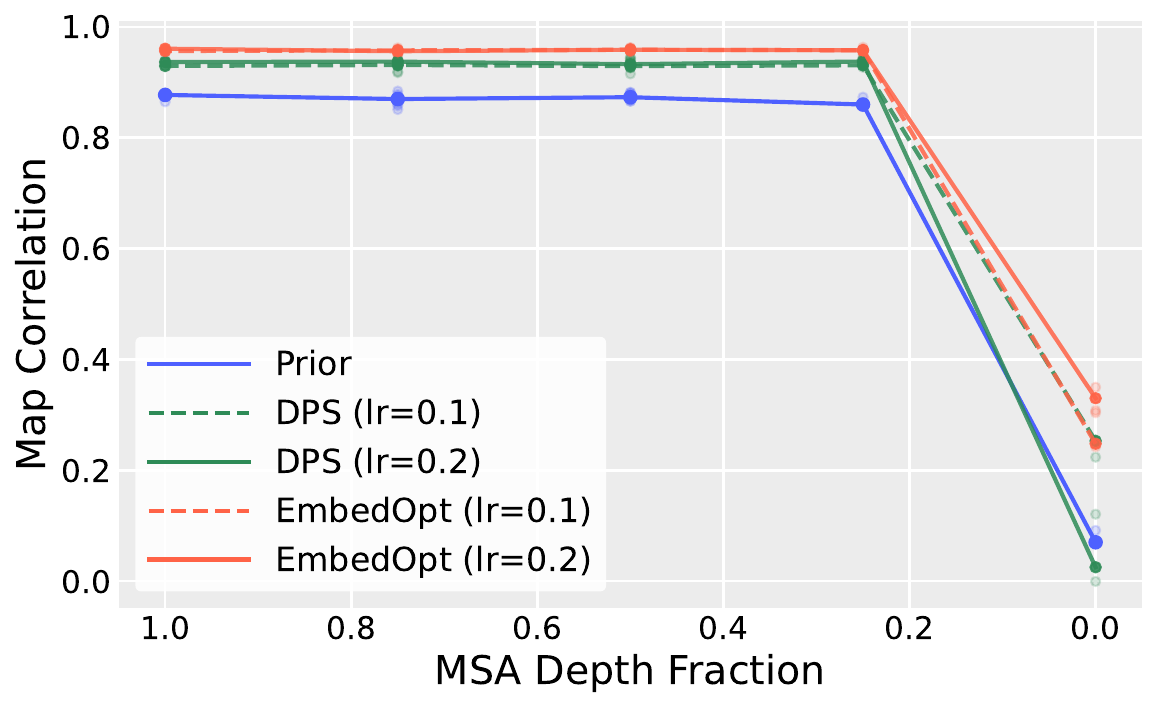}
        \caption{\texttt{8AHU\_A} --- Map Correlation ($\uparrow$)}
        \label{fig:msa_8AHU_CC}
    \end{subfigure}
    \hfill
    \begin{subfigure}[t]{0.48\linewidth}
        \centering
        \includegraphics[width=\linewidth]{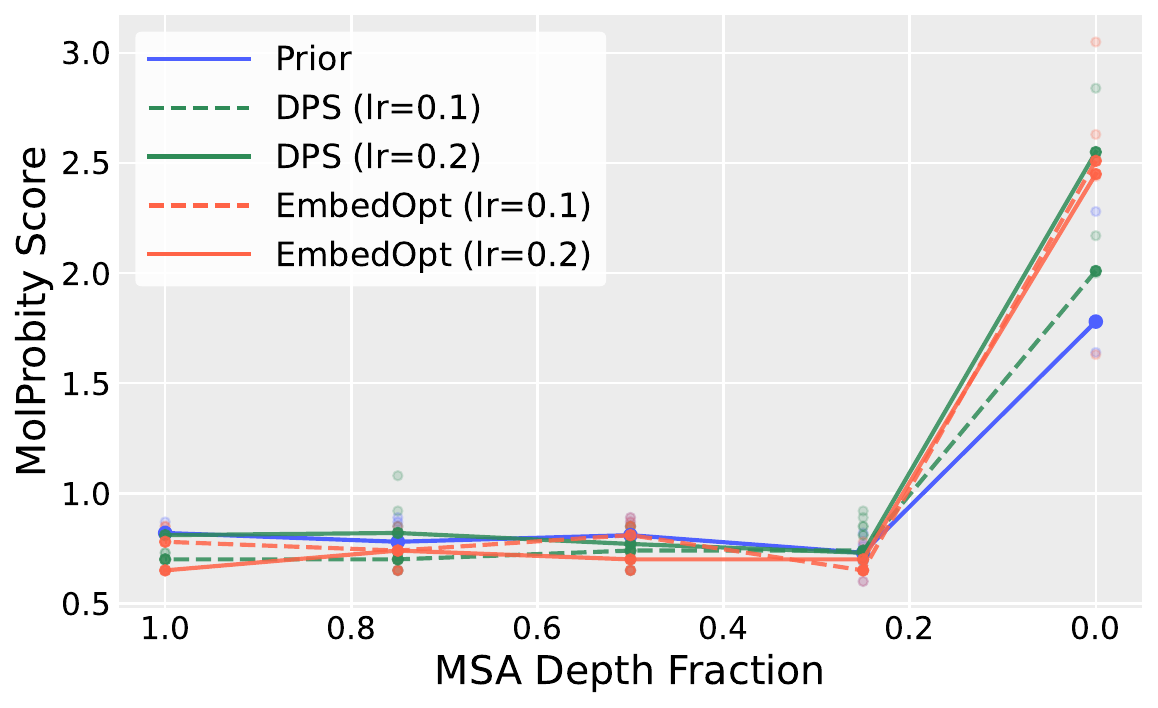}
        \caption{\texttt{8AHU\_A} --- MolProbity Score ($\downarrow$)}
        \label{fig:msa_8AHU_MP}
    \end{subfigure}

    \vspace{0.5em}

    \begin{subfigure}[t]{0.48\linewidth}
        \centering
        \includegraphics[width=\linewidth]{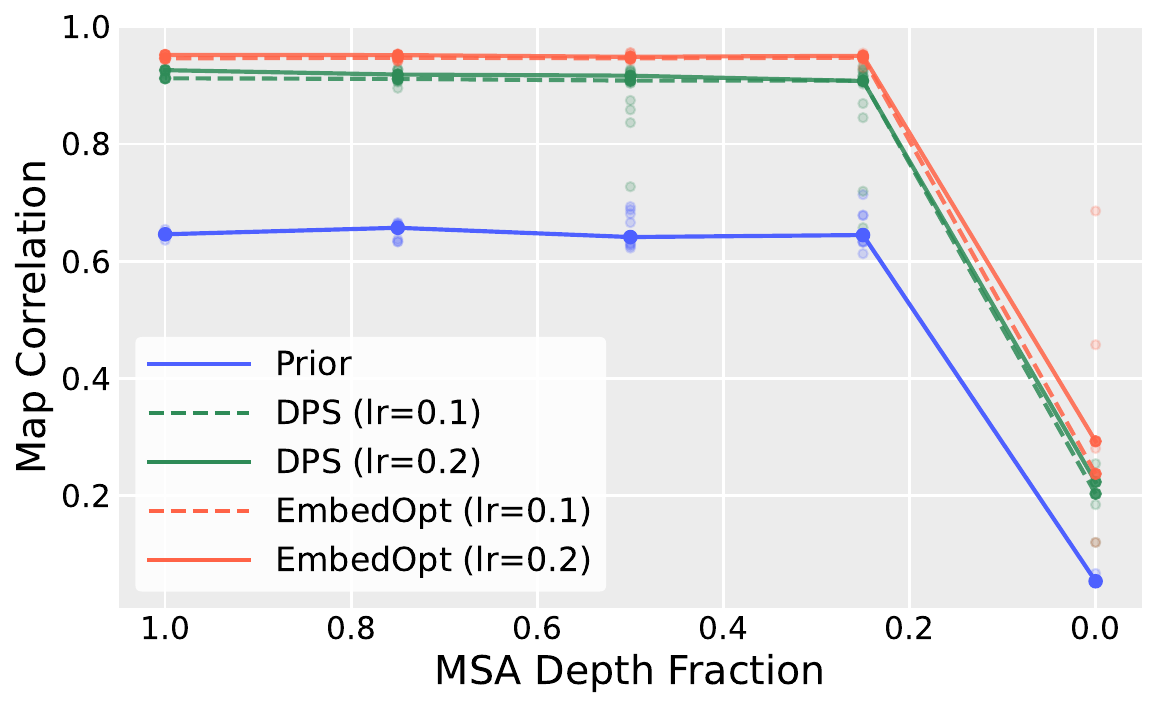}
        \caption{\texttt{8GXU\_A} --- Map Correlation ($\uparrow$)}
        \label{fig:msa_8GXU_CC}
    \end{subfigure}
    \hfill
    \begin{subfigure}[t]{0.48\linewidth}
        \centering
        \includegraphics[width=\linewidth]{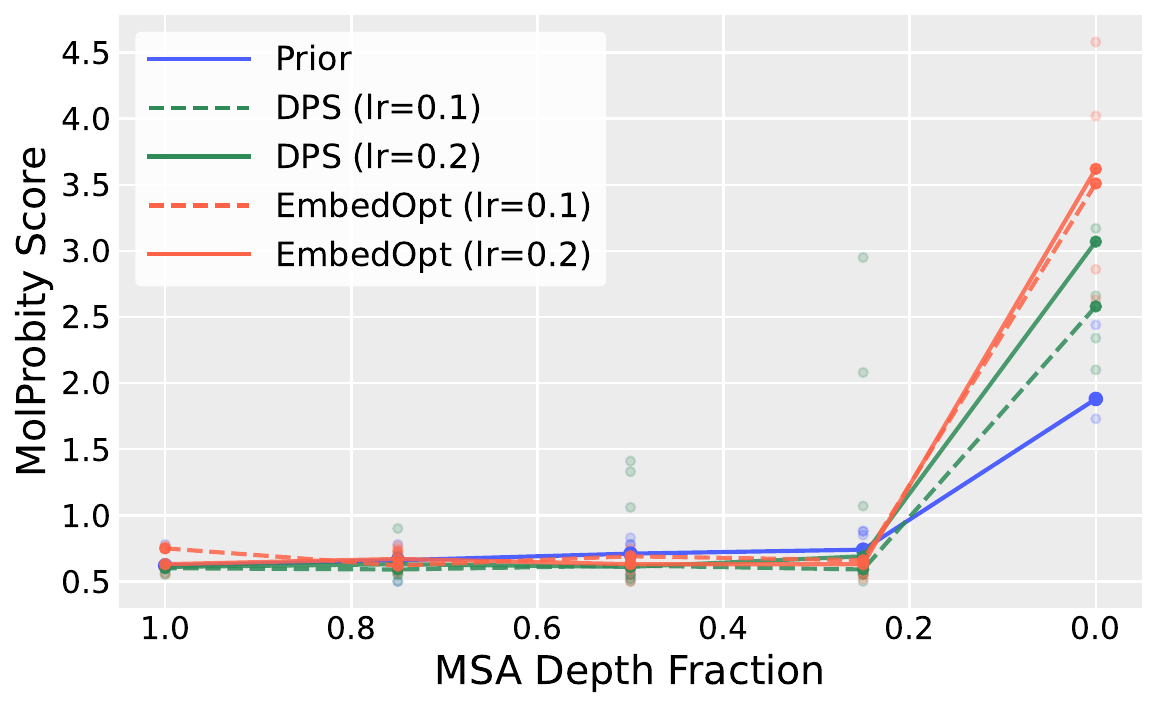}
        \caption{\texttt{8GXU\_A} --- MolProbity Score ($\downarrow$)}
        \label{fig:msa_8GXU_MP}
    \end{subfigure}

    \vspace{0.5em}

    \begin{subfigure}[t]{0.48\linewidth}
        \centering
        \includegraphics[width=\linewidth]{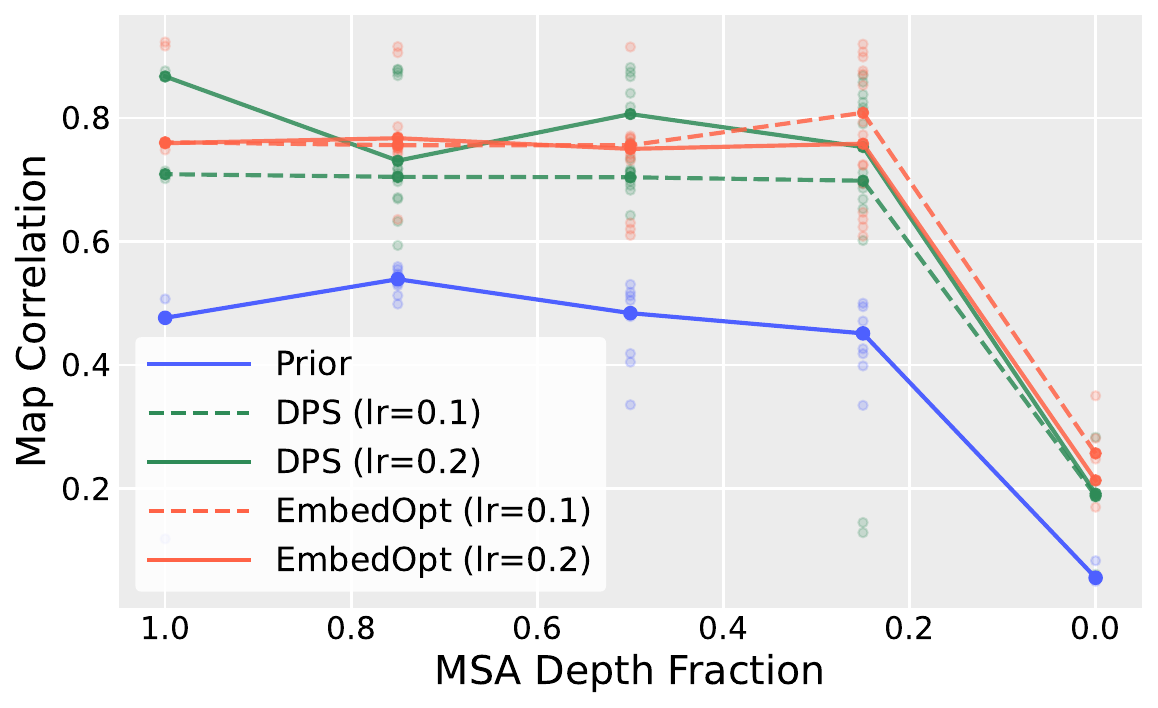}
        \caption{\texttt{8K9Z\_A} --- Map Correlation ($\uparrow$)}
        \label{fig:msa_8K9Z_CC}
    \end{subfigure}
    \hfill
    \begin{subfigure}[t]{0.48\linewidth}
        \centering
        \includegraphics[width=\linewidth]{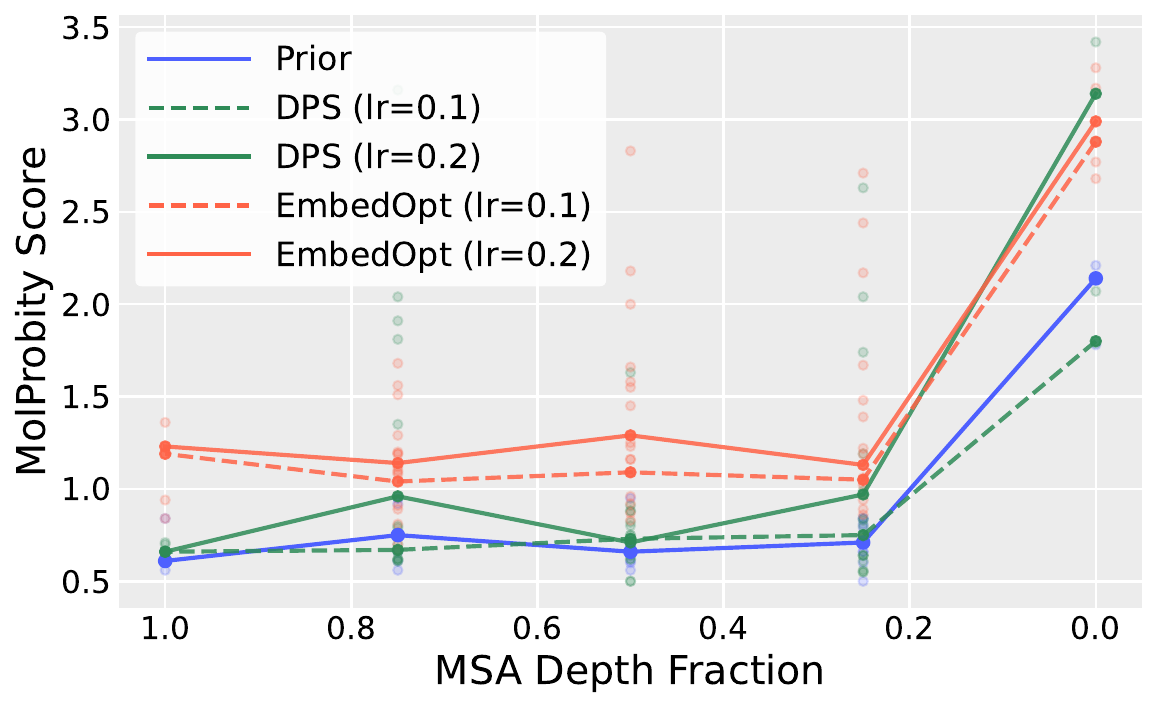}
        \caption{\texttt{8K9Z\_A} --- MolProbity Score ($\downarrow$)}
        \label{fig:msa_8K9Z_MP}
    \end{subfigure}

    \caption{\textbf{Effect of MSA depth on the unguided prior, DPS, and EmbedOpt across three targets of increasing difficulty.} Rows correspond to systems (\textbf{top}: \texttt{8AHU\_A}, easy; \textbf{middle}: \texttt{8GXU\_A}, moderate; \textbf{bottom}: \texttt{8K9Z\_A}, hard). The left column reports map-model correlation (CC, $\uparrow$ higher is better) and the right column reports MolProbity score ($\downarrow$ lower is better) as a function of MSA depth ($0\%$, $25\%$, $50\%$, $75\%$, $100\%$ of the full MSA, with random subsampling). Solid lines show the median across 3 MSA seeds $\times$ 3 diffusion seeds (9 runs per depth, except 3 at depths $0\%$ and $100\%$); semi-transparent dots show individual runs. Both steering methods, EmbedOpt and DPS,  consistently outperform the unguided prior between 25\% and 100\% MSA, but all methods collapse at 0\% MSA, signaling that evolutionary information is essential for the prior to provide a usable starting point. }
    \label{fig:msa_ablation}
\end{figure}

To probe the robustness of the unguided prior and the guided methods to a weakened evolutionary signal, we evaluate three targets in the synthetic map benchmark spanning a range of sizes and prior difficulties: \texttt{8AHU\_A} (283 residues; easy --- prior CC $\approx 0.87$), \texttt{8GXU\_A} (578 residues; moderate --- prior CC $\approx 0.65$), and \texttt{8K9Z\_A} (405 residues; hard --- prior CC $\approx 0.37$). For each system we sweep MSA depth $\in \{0\%, 25\%, 50\%, 75\%, 100\%\}$ (random MSA subsampling with 3 subsampling seeds $\times$ 3 diffusion seeds; at 0\% and 100\% MSA only 3 diffusion seeds  since no subsampling is required) and compare the unguided prior against two map-guided samplers, DPS and EmbedOpt, at learning rates $0.1$ and $0.2$.

Results are summarized in \Cref{fig:msa_ablation}. Across all three systems we observe a consistent trend. For all methods, both performance (map correlation) and geometric quality (MolProbity score) remain relatively stable across MSA depths from 25\%to 100\% and drop sharply at 0\% MSA. The steering methods, EmbedOpt and DPS, consistently outperform the unguided prior across the full range of MSA depths. At 0\% MSA, however, EmbedOpt and DPS provide only marginal improvements over the unguided prior, and all three remain far below their partial- or full-MSA performance. This result indicates that evolutionary information is essential for the prior to provide a usable starting point: steering can rescue a degraded prior, but cannot compensate for one that is entirely uninformed.

\begin{figure}[p]
    \centering
    \includegraphics[width=\linewidth, height=0.88\textheight, keepaspectratio]{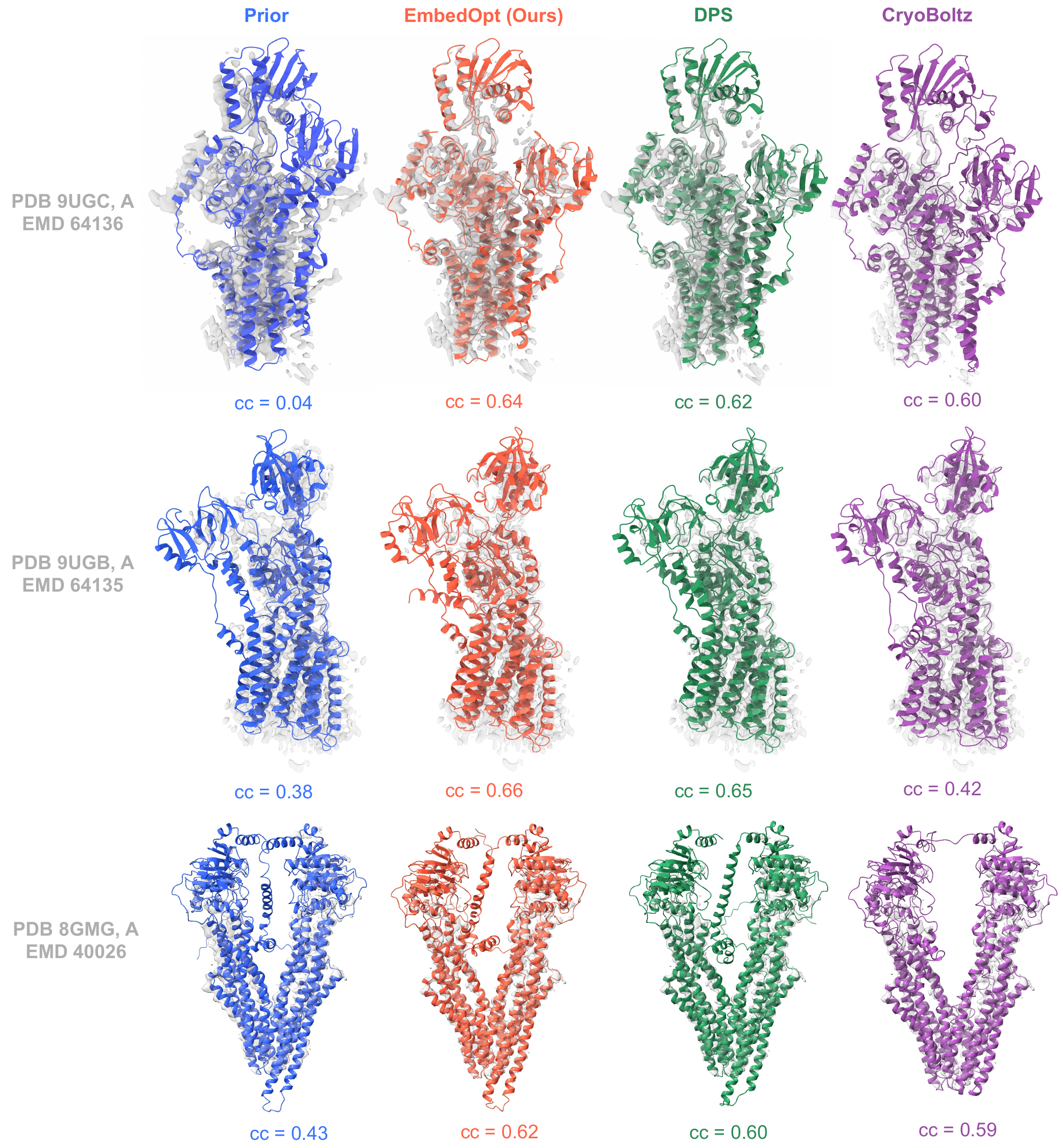}
    \caption{\textbf{Real Cryo-EM Map Fitting Benchmark: Visualization Gallery (Part 1 of 2).} Per-system visual comparison of predicted structures for the first half of the six CryoBoltz benchmark targets summarized quantitatively in \Cref{fig:real_map_cryoboltz}. For each system, we display the experimental cryo-EM density together with the predicted model from the median-CC seed of four methods: the unguided Protenix prior, DPS (within Protenix), EmbedOpt (within Protenix), and CryoBoltz (within Boltz-1). EmbedOpt uses a consistent learning rate of $\alpha = 0.1$; DPS uses the best learning rate per system from \Cref{app:fig:real_map_lr_sensitivity}; CryoBoltz uses default published settings. The remaining three targets are shown in \Cref{app:fig:real_cryoem_gallery_B}.}
    \label{app:fig:real_cryoem_gallery_A}
\end{figure}

\begin{figure}[p]
    \centering
    \includegraphics[width=\linewidth, height=0.88\textheight, keepaspectratio]{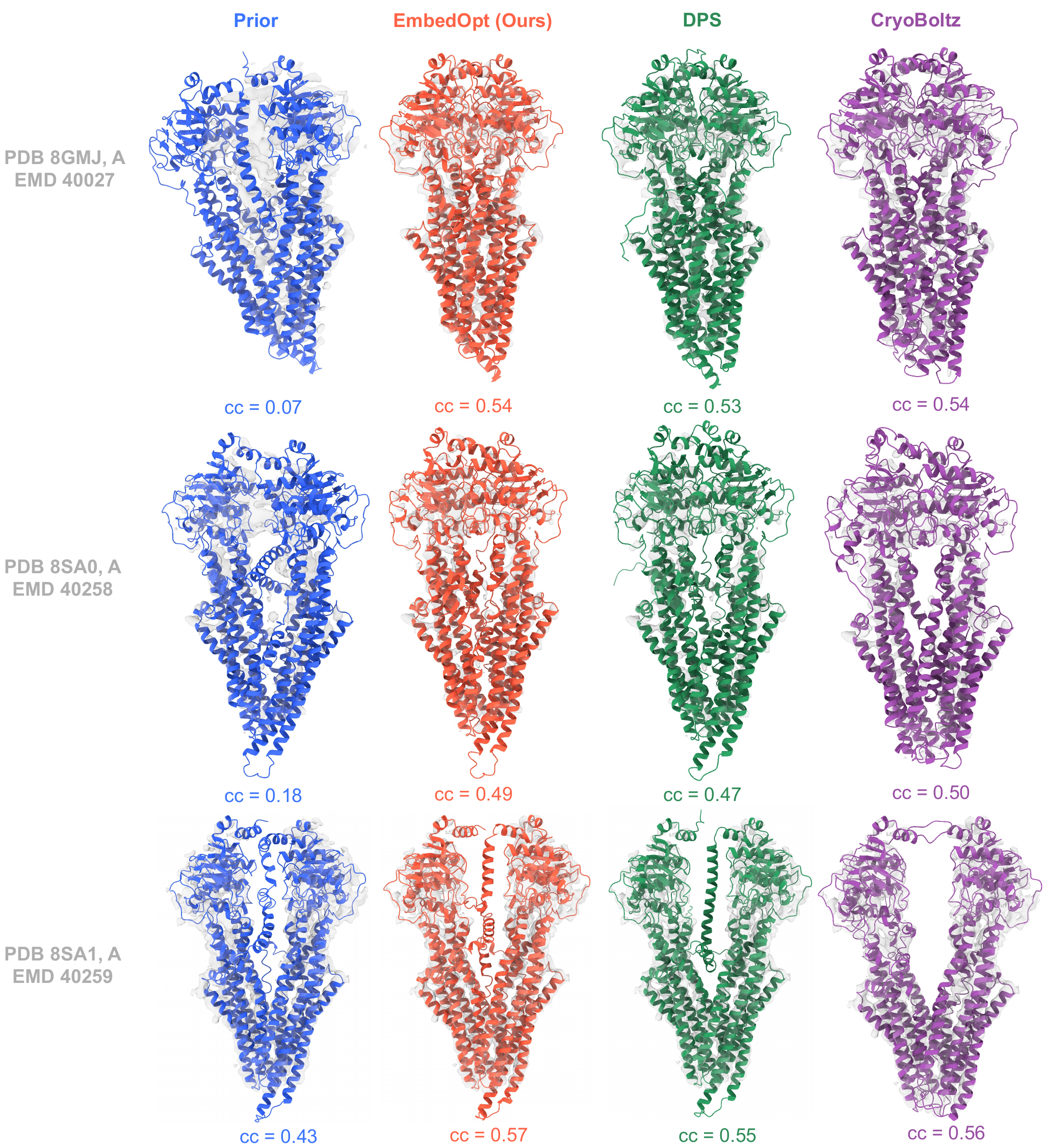}
    \caption{\textbf{Real Cryo-EM Map Fitting Benchmark: Visualization Gallery (Part 2 of 2).} Continuation of \Cref{app:fig:real_cryoem_gallery_A}, displaying the remaining three CryoBoltz benchmark targets under the same visualization protocol: experimental cryo-EM density overlaid with the median-CC predicted model from the unguided Protenix prior, DPS, EmbedOpt, and CryoBoltz. Method-specific learning-rate and configuration choices match those of \Cref{app:fig:real_cryoem_gallery_A}.}
    \label{app:fig:real_cryoem_gallery_B}
\end{figure}

\end{document}